\newtheorem{theorem}{Theorem}[section]
\newtheorem{lemma}[theorem]{Lemma}
\newtheorem{example}[theorem]{Example}
\newtheorem{definition}[theorem]{Definition}
\newcommand{\bbox}{\vrule height7pt width4pt depth1pt}
\newcommand{\prf}{\noindent{\bf Proof:} }
\newcommand{\eprf}{\bbox\vspace{0.1in}}
\newcommand{\PN}{\textsc {PN}}
\newcommand{\PS}{\textsc{PS}}
\newcommand{\PNS}{\textsc {PNS}}
\newcommand{\U}{{\cal U}}
\newcommand{\V}{{\cal V}}
\newcommand{\R}{{\cal R}}
\newcommand{\F}{{\cal F}}
\renewcommand{\S}{{\cal S}}
\renewcommand{\P}{{\cal P}}
\newcommand{\Lcaus}{{\cal L}}
\newcommand{\red}{\color{black}}
\newcommand{\blue}{\color{black}}
\newcommand{\green}{}
\newcommand{\purple}{\color{black}}
\newcommand{\pink}{\color{black}}
\newcommand{\union}{\cup}
\newcommand{\sat}{\models}
\newcommand{\dimp}{\Leftrightarrow}
\newcommand{\new}[1]{{\color{red}{#1}}}
\newcommand{\commentout}[1]{}
\newcommand{\shortv}[1]{#1}
\newcommand{\fullv}{\commentout}
\renewcommand{\phi}{\varphi}
\title{Intervention and Conditioning in Causal Bayesian Networks}
\author{Sainyam Galhotra\\
    Computer Science Dept. \\
    Cornell University\\
   \texttt{sg@cs.cornell.edu}
    \And
    Joseph Y. Halpern\\
    Computer Science Dept. \\
    Cornell University\\
     \texttt{halpern@cs.cornell.edu}
    }
\begin{document}
\maketitle

\begin{abstract}
Causal models are crucial for understanding complex systems and
identifying causal relationships among variables. Even though causal
models are extremely popular, conditional probability calculation of
formulas involving interventions pose significant challenges.
In case of Causal Bayesian Networks (CBNs), Pearl assumes autonomy 
of mechanisms that determine interventions to calculate a range of
probabilities.
%
We show that by making simple yet
often
realistic independence
assumptions, it  is possible 
to uniquely estimate the probability of an interventional formula (including
the well-studied  notions of probability of sufficiency and necessity). 
We discuss when these assumptions are appropriate.
Importantly,
in many cases of interest, when the assumptions are appropriate,
these probability estimates can be evaluated using
observational data, which carries immense significance in scenarios
where conducting experiments is impractical or unfeasible.
\end{abstract}

{\blue
  \section{Introduction}
  
Causal models play a pivotal role in elucidating the causal relationships
among variables. These models facilitate a principled approach to
understanding how various factors interact and influence each other in
complex systems. For instance, in epidemiology, causal models are
instrumental in deciphering the relationships between lifestyle choices
and health outcomes~\cite{greenland1999causal}; and in economics,
they help in analyzing the impact
of policy changes on market dynamics~\cite{hicks1980causality}.
These examples underscore the
versatility and utility of causal models for providing a formal representation
of  system variables.

Interventions and conditioning are the most fundamental procedures in the
application of causal models, useful to examine and analyze causal mechanisms.
One of the most recent applications of an intervention is to explain the outcome of a
complex ML system~\cite{galhotra2021explaining},
For example, in AI-driven healthcare diagnostics,
it's crucial to discern whether a particular intervention (like a change in treatment protocol) 
will sufficiently alter patient outcomes~\cite{greenland1999relation}. 

Despite their utility, calculating the probabilities related to interventions and
conditioning in tandem presents significant challenges.
Indeed, it is not even clear what the semantics of queries involving
counterfactuals is.  Work in the AI literature has
focused on two types of models: \emph{functional} causal models} and
\emph{causal Bayesian networks} \cite{pearl:2k}.  \shortv{Both}
\fullv{Models of both types}
are typically described using directed acyclic graphs, where each node is
associated with a 
variable.  In a causal model, with each variable $Y$ associated with a
non-root node, there is a deterministic (structural) equation, that
gives the value 
of $Y$ as a function of the values of its parents; there is also a
probability on the values of root nodes.  In a CBN, like in a
Bayesian network, each variable $Y$ is associated with a \emph{conditional
probability table (cpt)}, that for each setting of the parents of $Y$, gives
the probability of $Y$ conditional on that setting.  In a functional
causal model, it is actually straightforward to determine the
conditional probability of formulas involving interventions.  In a
CBN, this is far from true.  Indeed, recent work of Beckers
\citeyear{Beckers23} has shown that an approach given by Pearl
\citeyear{pearl:2k} to calculate these probabilities in a CBN is incorrect.  
\footnote{Pearl \citeyear{pearl:2k}[Theorem 7.1.7] provides a  (correct)
  three-step procedure 
for calculating counterfactual probabilities in a causal model.
But then on p. 220, Pearl says that the same procedure works for
CBNs.  Specifically, he says ``counterfactual
probabilities $p(Y_x = y \mid e)$ can still be evaluated using the three
steps (abduction, action, and prediction) of Theorem 7.1.7. In the
abduction phase, we condition the prior probability $p(u)$ of the root
nodes on the evidence available, $e$, and so obtain $p(umid e)$. In the
action phase, we delete the arrows entering variables in set $X$ and
instantiate their values to $X = x$. Finally, in the prediction phase,
we compute the probability of $Y = y$ resulting from the updated
manipulated network.''   As Beckers shows, this is incorrect.
Here's a trivial counterexample  Suppose that we have a simple causal
model with one exogenous variable $U$, which is the parent of an
endogenous variable $Y$, which in turn in is the parent of an
endogenos variable $X$.  All variables are binary.  $U=1$ with probability 1.
$Y=U$, and if $Y=1$, then $X=0$ with probability $1/2$ and $X=1$ with
probability $1/2$.  Now consider $p(X=1\mid X=1)$.  Applying Pearl's
procedure, the probability of $U=1$ continues to be $1$ (no amount of
conditioning will change that).  Since there are no interventions,
$Y=1$ with probability 1, and $X=1$ with probability $1/2$.  That is,
$p(X=1 \mid X=1) = 1/2$ according to Pearl's procedure.  But this is
clearly incorrect.}
Pearl also calculates probabilities in a CBN by implicitly reducing
the CBN to a family of functional causal models (see, e.g.,
\cite[Theorem 9.2.10]{pearl:2k}), but he does not give an explicit
reduction, nor does he give a formal definition of the probability of a
formula in  a CBN.  Here, we do both.  Using this approach leads to
formulas having a range of probabilities in a CBN, whereas in a
functional causal model, their probability is unique.  

But we take an additional significant step.
Pearl assumes that mechanisms  
that determine how interventions work (which are
given by
the cpts in the case of
CBNs and the structural equations in the case of causal models)
are \emph{autonomous}: as Pearl puts it, ``external changes affecting
one equation do not imply changes to the others''
\cite[p. 28]{pearl:2k}.
We model this autonomy formally by taking the equations to be
independent of each other, in an appropriate space.  As shown recently
by Richardson and Halpern \citeyear{HR23}, taking the equations that
characterize different variables to be independent is a necessary and
sufficient condition for reproducing all the (conditional) independencies
in the underlying Bayesian network, as determined by
\emph{d-separation} \cite{Pearl}.  Thus, this independence seems like
a natural and critical assumption to get CBNs and causal models to
work as we would expect.

Here we assume that, not only are the equations that define different
variables independent, but also the 
equations that give the values of a variable for different settings of
its parents.  We never need to consider the values of a variable for
different settings of its parents in a standard Bayesian network, but
this is necessary to determine the probability of a formula involving
interventions, such as $X=0 \land Y=0 \land [X \gets 1](Y=1)$ ($X$ and
$Y$ have value 0, but if $X$ is set to 1, $Y$ gets value 1).
Taking these latter equations to be independent is not always
appropriate;\footnote{We thank Elias Bareinboim and Scott Muller for
stressing this point.}  For example, there may be a latent exogenous
variable that affects the value of $Y$ for different settings of $Y$'s
parents. But if the parents of $Y$ (including exogenous variables) are
all observable, and screen $Y$ off from the 
effects of all other variables, then the independence assumption seems
appropriate.


Making these independence assumptions has significant benefits.  For
one thing, it 
allows us to uniquely identify the probability of queries in a CBN;
rather than getting a range of values, we get a unique value.
Moreover, for many formulas of interest (including the
\emph{probability of necessity} and \emph{probability of sufficiency}
\cite{pearl:2k}, we can compute the probability by considering only
conditional probabilities involving only a subset of endogenous and 
exogenous variables, which do not involve interventions.  This
means that these probabilities can be estimated from observational data,
without requiring involving controlled experiments.  This can
have huge implications in settings where such experimental data is not
available but the exogenous variables can be observed.

The rest of this paper is organized as follows. Section~\ref{sec:causalmodel}
reviews the formalism of causal
models. Section~\ref{sec:semantics} gives semantics to formulas in Causal Bayesian
Networks (CBNs)  and Section~\ref{sec:convert}  shows that any CBN can be
converted to a compatible causal model that satisfies
the independence assumptions that we are interested in.
We show how counterfactual probabilities of 
necessity and sufficiency can be simplified and calculated in 
Section~\ref{sec:computing}.

 \section{Causal Models and CBNs}\label{sec:causalmodel}

  In a (\emph{functional) causal model} (also called a \emph{structural
  equations model}), the world is assumed  to be described in terms of 
variables and their values.  
Some variables may have a causal influence on others. This
influence is modeled by a set of {\em structural equations}.
It is conceptually useful to split the variables into two
sets: the {\em exogenous\/} variables, whose values are
determined by 
factors outside the model, and the
{\em endogenous\/} variables, whose values are ultimately determined by
the exogenous variables.  In some settings, exogenous variables
can be observed; but they can never be intervened upon, as (by
assumption) their values
are determined by factors outside the model.
Note that
exogenous variables may involve  latent factors that are
not observable, and may even be unknown.
For example, in an agricultural setting, we could have
endogenous variables that describe crop produce, amount of fertilizers 
used, water consumption, and so on, and exogenous variables
that describe weather conditions (which cannot be modified, but can be
observed) 
and some latent factors, like the activity level of pollinators (which
cannot be observed or measured).
The structural equations
describe how the values of endogenous variables are determined
(e.g., how the water consumption depends on the weather conditions and
the amount of fertilizer used).

Formally, a \emph{causal model} $M$
is a pair $(\S,\F)$, where $\S$ is a \emph{signature}, which explicitly
lists the endogenous and exogenous variables  and characterizes
their possible values, and $\F$ defines a set of \emph{modifiable
structural equations}, relating the values of the variables.  
A signature $\S$ is a tuple $(\U,\V,\R)$, where $\U$ is a set of
exogenous variables, $\V$ is a set 
of endogenous variables, and $\R$ associates with every variable $Y \in 
\U \cup \V$ a nonempty set $\R(Y)$ of possible values for 
$Y$ (that is, the set of values over which $Y$ {\em ranges}).  
For simplicity, we assume that $\V$ is finite, as is $\R(Y)$ for
every endogenous variable $Y \in \V$.
$\F$ associates with each endogenous variable $X \in \V$ a
function denoted $F_X$ such that $F_X: (\times_{U \in \U} \R(U))
\times (\times_{Y \in \V - \{X\}} \R(Y)) \rightarrow \R(X)$.
This mathematical notation just makes precise the fact that 
$F_X$ determines the value of $X$,
given the values of all the other variables in $\U \cup \V$.
\fullv{If there is one exogenous variable $U$ and three endogenous
variables, $X$, $Y$, and $Z$, then $F_X$ defines the values of $X$ in
terms of the values of $Y$, $Z$, and $U$.  For example, we might have 
$F_X(u,y,z) = u+y$, which is usually written as
$X = U+Y$.   Thus, if $Y = 3$ and $U = 2$, then
$X=5$, regardless of how $Z$ is set.%
\footnote{The fact that $X$ is assigned  $U+Y$ (i.e., the value
of $X$ is the sum of the values of $U$ and $Y$) does not imply
that $Y$ is assigned $X-U$; that is, $F_Y(U,X,Z) = X-U$ does not
necessarily hold.}}    

The structural equations define what happens in the presence of external
interventions. 
Setting the value of some variable $X$ to $x$ in a causal
model $M = (\S,\F)$ results in a new causal model, denoted $M_{X
\gets x}$, which is identical to $M$, except that the
equation for $X$ in $\F$ is replaced by $X = x$.

Following most of the literature, we restrict attention here to what
are called {\em recursive\/} (or {\em acyclic\/}) models.
In such models, there is a
total ordering $\prec$ of the endogenous variables
\fullv{(the ones in $\V$)} 
such that if $X \prec Y$, then
$X$ is not causally influenced by $Y$,
that is, $F_X(\ldots, y, \ldots) = F_X(\ldots, y', \ldots)$ for all $y, y' \in
\R(Y)$.  Intuitively, if a theory is recursive, there is no
feedback.  If $X \prec Y$, then the value of $X$ may affect the value of
$Y$, but the value of $Y$ cannot affect the value of $X$.
It should be clear that if $M$ is an acyclic  causal model,
then given a \emph{context}, that is, a setting $\vec{u}$ for the
exogenous variables in $\U$, there is a unique solution for all the
equations.  We simply solve for the variables in the order given by
$\prec$. The value of the variables that come first in the order, that
is, the variables $X$ such that there is no variable $Y$ such that $
Y\prec X$, depend only on the exogenous variables, so their value is
immediately determined by the values of the exogenous variables.  
The values of the variables later in the order can be determined once we have
determined the values of all the variables earlier in the order.

A recursive causal model can be described by a dag (directed acyclic
graph) whose nodes are labeled by variables, and there is an edge from
$X$ to $Y$ if $X \prec Y$.  We can assume without loss of generality
that the equation for $Y$ involves only the parents of $Y$ in the dag.
The roots of the dag are labeled by exogenous variables or endogenous
variables with no parents; all the remaining nodes are labeled by
endogenous variables.%
\footnote{Note that the equation for an endogenous variable $X$ with no
parents must be a constant function; e.g., $F_X = 3$.  In the model
$M_{X \gets x}$ that results from $M$ after intervening on $X$, $X$ is
an endogenous variable with no parents.}

A \emph{probabilistic} (functional) causal model is a pair $(M,\Pr)$
consisting of a causal model $M$ and a probability $\Pr$ on the
contexts of $M$.  In the rest of this paper, when we refer to a
``causal model'', we mean a probabilistic functional causal model,
unless we explicitly say otherwise.

A \emph{causal Bayesian network (CBN)} is a tuple $M = (\S,\P)$
described by a signature $\S$, just like a causal model, and a
  collection $\P$ of \emph{conditional 
  probability tables (cpts)}, one for each (endogenous and
  exogenous) variable.%
  \footnote{Some authors (e.g., Pearl \citeyear{pearl:2k} seem to
  assume that CBNs do not include exogenous variables. We find it
  useful to allow them.}
For this paper, we focus on recursive CBNs that can be characterized
by a dag, where there is a bijection between the nodes and the 
(exogenous and endogenous) variables.  The cpt for a variable $X$
quantifies the effects of the parents of $X$
on $X$.  For example, if the parents of $X$ are $Y$ and $Z$ and all
variables are binary, then the
cpt for $X$ would have entries for all $j, k \in \{0,1\}^2$, where the
entry for $(j,k)$ describes$\{Pr(X=0 \mid Y=j, Z=k)$.
(There is no need to have an explicit entry for
$P(X=1 \mid Y=j \cap Z=k),$ since this is just $1 - P(X=0 \mid
Y=j \cap Z=k)$.)
There is also a cpt for the roots of the dag; it is just an
unconditional probability, since a root has no parents.

Just as for causal models, we can also perform interventions in a CBN:
intervening to set the value of some variable $X$ to $x$ in a CBN
$M$ results in a new CBN, denoted $M_{X
\gets x}$, which is identical to $M$, except that now $X$ has no
parents; the cpt for $X$ just gives $X$ value $x$ with probability 1.

Note that we typically use the letter $M$ to refer to both
non-probabilistic causal models {\green and} CBNs, while 
we use $\Pr$ to refer to the probability on contexts in a
probabilistic causal model.  We use $P$ to refer to the probability in
a cpt.
It is also worth noting that a causal model can be viewed as a CBN;
the equation $Y= F(\vec{x})$ can be identified with the entry
$P(Y=F(\vec{x})) \mid \vec{X} 
= \vec{x}) = 1$ in a cpt.

\section{Giving semantics to formulas in CBNs}\label{sec:semantics}

\subsection{The problem}\label{sec:problem} Consider the following (standard) language for
reasoning about causality: Given a signature $\S = (\U,\V,\R)$, a
\emph{primitive event} is a formula of the form $X = x$, for  $X \in
\V$ and $x \in \R(X)$.   A {\em causal formula (over $\S$)\/} is one
of the form $[Y_1 \gets y_1, \ldots, Y_k \gets y_k] \phi$,
where
\fullv{
\begin{itemize}
\item}
$\phi$ is a Boolean
combination of primitive events,
\fullv{\item} $Y_1, \ldots, Y_k$ are distinct variables in $\V$, and
\fullv{\item} $y_i \in \R(Y_i)$.
\fullv{\end{itemize}}
Such a formula is
abbreviated
as $[\vec{Y} \gets \vec{y}]\phi$.
The special
case where $k=0$
is abbreviated as
$\phi$.
Intuitively,
$[Y_1 \gets y_1, \ldots, Y_k \gets y_k] \phi$ says that
$\phi$ would hold if
$Y_i$ were set to $y_i$, for $i = 1,\ldots,k$. 
{$\Lcaus(\S)$ is the language consisting of Boolean
    combinations of causal formulas}.  We typically take the
   signature $\S$
to be fixed, and just write $\Lcaus$.  It will be convenient to
  consider a slightly richer language, that we denote
  $\Lcaus^+(\S)$.  It extends $\Lcaus(\S)$ by allowing primitive
events $U=u$, where $U \in \U$, and also allowing interventions on
exogenous variables.%
\footnote{ It is conceptually somewhat inconsistent to allow
interventions on exogenous variables, since their value is assumed to
be determined by factors outside the model, but it is technically
convenient for some of our results.}

{A pair $(M,\vec{u})$ consisting of a (non-probabilistic)
  causal model $M$ and a 
  context $\vec{u}$ is called a \emph{(causal) setting}.} {\blue  
  A formula $\phi \in \Lcaus^+$ is either true or false in a
  setting.
We write $(M,\vec{u}) \sat \psi$  if
the causal formula $\psi$ is true in
the setting $(M,\vec{u})$.
The $\sat$ relation is defined inductively.
$(M,\vec{u}) \sat X = x$ if
the variable $X$ has value $x$
in the
unique (since we are dealing with acyclic models) solution
to
the equations in
$M$ in context $\vec{u}$
(that is, the
unique vector
of values for the exogenous variables that simultaneously satisfies all
equations 
in $M$ 
with the variables in $\U$ set to $\vec{u}$).
The truth of conjunctions and negations is defined in the standard way.
Finally, 
$(M,\vec{u}) \sat [\vec{Y} \gets \vec{y}]\phi$ if 
$(M_{\vec{Y} \gets \vec{y}},\vec{u})_{\vec{Y} \gets \vec{y}}
  \sat \phi$, where $(M_{\vec{Y} \gets \vec{y}}$ is identical to
  $M$ except that the equation for each endogenous variable $Y \in 
  \vec{Y}$ is replaced by $Y = y^*$, where $y^* \in \R(Y)$ is the value in
  $\vec{y}$ corresponding to $Y$, and $\vec{u}_{\vec{Y} \gets
    \vec{y}}$ is identical to $\vec{u}$, except that for each
  exogenous variable $U \in \vec{Y}$, the component of $\vec{u}$
  corresponding to $U$ is replaced by $u^*$,
  where $u^* \in \R(U)$ is the value in $\vec{y}$ corresponding to
  $U$.  (We remark that in a CBN, intervening to set an exogenous
  variable $U$ to $u^*$ is just like any other intervention; we change
  the cpt for $U$ so that $u^*$ gets probability 1.)}
  
In a probabilistic causal model  $(M,\Pr)$, we can assign a probability
to formulas in $\Lcaus$ by taking the probability of a formula $\phi$
in $M$, denoted $\Pr(\phi)$,
to be $\Pr(\{\vec{u}: (M,\vec{u}) \sat \phi\})$.
Thus, the probability of $\phi$ in $M$ is simply the
probability of the set of contexts in which $\phi$ is true; we can
view each formula as corresponding to an event.

When we move to CBNs, things are not so straightforward.  First, while
we still have a probability on contexts, each context determines a
probability on \emph{states}, assignments of values
to variables.  A state clearly determines a truth value for formulas
that do not involve interventions; call such formulas \emph{simple
formulas}.  Thus, we can compute the truth of a simple formula $\phi$
in a context, and 
then using the probability of contexts, determine the probability of $\phi$
in a CBN $M$.  But what about a causal formula such as
$\psi = [\vec{Y} \gets \vec{y}]\phi$?  Given a context $\vec{u}$, we can
determine the model $M' = M_{\vec{Y} \gets \vec{y}}$.  In
$(M',\vec{u})$, $\phi$ is an event whose probability we can
compute, as discussed above.  We can (and will) take this probability to be
the probability of the formula $\psi$ in $(M,\vec{u})$.  But note that
$\psi$ does not correspond to an event in $M$, although we assign it a
probability.

The situation gets worse if we add another conjunct $\psi'$ and
consider the formula $\psi \land \psi'$.
While we can use the procedure above to compute the probability of
$\psi$ and $\psi'$ individually in $(M,\vec{u})$, what is the
probability of the conjunction?  {\red Because such formulas do
not correspond to events in $M$, this is not obvious.  {\blue We give
  one approach for defining the probability of a formula in a CBN by 
making one key assumption, which can be
viewed as a generalization of Pearl's assumption.}

{\blue Pearl assumes that mechanisms  
that determine how interventions work (which are the cpts in the case of
CBNs and the structural equations in the case of causal models)
are \emph{autonomous}; he takes
that to mean ``it is conceivable to change one such relationship
without changing the others'' \cite[p. 22]{pearl:2k}.  We go further
and assume, roughly speaking, that they are (probabilistically)
independent.  In a causal 
model, the mechanism for a given variable  (specifically, the outcome
after the intervention)
is an event, so we
can talk about mechanisms being independent.  While it is not an event
in a CBN, we nevertheless use the assumption that mechanisms are
independent to guide how we determine the probability of formulas in
$\Lcaus$ in a CBN.}

\commentout{
We actually do not want that mechanisms are unconditionally
independent; that
would lead to the probability of a formula such as $\phi \land [X \gets
  1]\psi$ in $M$ to 
being the product of the 
probability of $\phi$ in $M$ and the probability of $[X \gets 1]\psi'$ in $M$.
This would not even give us the right answer in functional causal
models.  If we identify probability 1 with truth in functional models,
then we get the appropriate answer by assuming independence
conditional on $\vec{u}$: $\phi \land [X \gets 1]\psi$ is true (i.e., has
probability 1) in $(M,\vec{u})$ iff both $\phi$ and $[X \gets 1]\psi$ are true
in $(M,\vec{u})$.
}

\blue{
  {\green \subsection{Independence of cpts and complete combinations of
      conditional events}\label{sec:independence}}
{\green To describe our approach, we must first make clear what we mean by}
mechanisms (cpts) being independent.
This has two components: the outcomes of cpts for different variables
are independent, and for the cpt for a single variable Y, the outcomes for
different settings of the parents of $Y$ are independent.
Indeed, all these outcomes are mutually independent.
{\green We believe that these independence assumptions are quite
  reasonable and, capture the spirit of Bayesian networks.  In
    fact, in Halpern and Richardson \citeyear{HR23}, it is shown that the
  assumption that cpts involving different variables are independent
  is equivalent to the (conditional) independence assumptions made in
Bayesian networks (see Section~\ref{sec:discussion} for further
discussion).%
\footnote{An excerpt of the other submission giving a formal proof of
this claim can be found
 \href{https://drive.google.com/file/d/157tkPaj9SqyDE_HvrEV88q-rrqXZ4igu/view?usp=sharing}{here}.}

In more detail,
suppose that we have a variable $Y_1$ in a CBN $M$ with parents $X_1
\ldots, X_m$.  We want to consider events of the form
 $Y_1 = y_1 \mid
(X_1 = x_1, \ldots, X_m = x_m)$, which we read ``$Y_1 = y_1$ given that
$X_1 = x_1$, \ldots, and $X_m = x_m$''.  Such events have a
probability, given by the cpts for $Y_1$.
We call such an event a \emph{conditional event for CBN $M$}.
(Explicitly mentioning the CBN $M$ is necessary, since on the right-hand
  side of the conditional with left-hand side $Y$, we have all the
  parents of $Y$; what the parents are depends on $M$.)}  
Roughly speaking, we identify {
such a conditional event with the formula $[X_1 \leftarrow j_1,
  \ldots, X_m \leftarrow j_m](Y_1 = 1)$. 
This identification already hints at why we 
we care about
conditional events (and their independence). Suppose for simplicity that
$m=1$.  To determine the probability of a formula such as $X_1 = 0 
\land Y_1 =0 \land [X_1 \gets 1](Y_1 = 1)$ we need to apply both the entry
in the cpt for $Y_1=0 \mid X_1 = 0$ and the entry for $Y_1=1 \mid 
X=1$.  They each give a probability; the probability of the formula 
$X_1 = 0 \land Y_1 =0 \land [X_1 \gets 1](Y_1 = 1)$ is the
probability that the conditional events $Y_1=0 \mid X_1 = 0$ and $Y_1=1 \mid 
X=1$ hold simultaneously.  Our independence assumption implies that
this probability is the product of the probability that each of them
holds individually (which is given by the cpt for $Y_1$).

This is an instance of independence within a cpt; we want the
conditional events in a cpt for a variable $Y$ for different
settings of the parents of $Y$ to  be independent.
(Of course,
conditional events for the same setting of the parents, 
such as $Y_1 = 0 \mid X_1 = 1$ and $Y_1 = 1 \mid X_1 = 1$, are not
independent.)  Independence for cpts of different variables is most easily
explained by example: Suppose that $Y_2$ has parents $X_1$ and $X_3$.
Then we want the events $Y_1 = 0 \mid X_1 = 0$ and 
$Y_2 = 1 \mid (X_1 = 0, X_3 =1)$ to be independent.
This independence assumption will be needed to compute the
probability of formulas such as $[X_1 \gets 0](Y_1 = 0) \land
[X_1 = 0, X_3 =1](Y_2 = 1)$.
As we said, we in fact want to view all
the relevant conditional  events as \emph{mutually} independent.%
\footnote{This implicitly assumes that all exogenous variables are
independent.  We can easily drop this assumption by assuming that
rather than having a separate cpt for each exogenous variable, we just
have a single cpt for contexts.  Nothing in the rest of the discussion
would change if we did this.}


Although we use the term ``conditional event'', these are not events in a CBN.
On the other hand,
in a causal model, there are corresponding notions that really do
correspond to events.  For example, the conditional event 
$Y_1 = 0 \mid X_1 = 1$ corresponds to the set of contexts
where the   formula $[X_1 
    \leftarrow 1](Y_1 = 1)$ is true.
Starting with a CBN $M$, we will be interested in causal models
for which the
probability $P(Y_1 = 0 \mid X_1 = 1)$, as given by the cpt for
$Y_1$ in $M$, is equal to the probability of the corresponding event in the
causal model.

Going back to CBNs, define a \emph{complete combination of
conditional events  (ccce) for $M$} to be a
conjunction consisting of the
choice of one conditional event for $M$
for each endogenous variable $X$ and each setting of the parents of
$X$.  A {\em fixed-context}   ccce (fccce) involves fewer 
conjuncts; we have only conditional events 
{ where for all the exogenous parents $U$ of a variable $X$,
  the value of $U$ is the same as its value in the conjunct
  determining the value of $U$} (the examples should make
clear what this means).} 

    \smallskip
    
    \begin{example}\label{xam:M*}
Consider the CBN $M^*$ with the following dag:
  \begin{tikzpicture}
  \tikzstyle{var} = [circle, draw, thick]

  \node[var] (U) at (0,0) {U};
  \node[var] (X) at (2,0) {X};
  \node[var] (Y) at (4,0) {Y};

 \draw[->, thick] (U) -- (X);
  \draw[->, thick] (X) -- (Y);
  
  \end{tikzpicture}
, where all variables are binary,
and the cpts give the following probabilities: $P(U=0) = a$,
$P(X=0 \mid U=0) = b$, $P(X=0 \mid U=1) = c$, $P(Y=0 \mid
X=0) = d$, and $P(Y=0 \mid X=1) = e$.
\commentout{
\begin{verbatim}
    U
   / \
  X   Y
   \ /   
    Z
\end{verbatim}

\noindent
In that case, a
ccce
involves a
choice of:
\begin{itemize}
\item one of $X=0\mid U=0$ and $X=1\mid U=0$,
\item one of $Y=0\mid U=0$ and $Y=1\mid U=0$,
\item one of $X=0\mid U=0$ and $X=1\mid U=1$,
\item one of $Y=0\mid U=0$ and $Y=1\mid U=1$,
\item one of $Z=1\mid X=0, Y=0$,
\item one of $Z=0\mid X=0, Y=1$ and $Z=1\mid X=0, Y=1$, 
\item one of $Z=0\mid X=1, Y=0$ and $Z=1\mid X=1, Y=0$,  and
\item one of $Z=0\mid X=1, Y=1$ and $Z=1\mid X=1, Y=1$.
\end{itemize}
In this case, there are 16 ccces.
It is easy to see that the number of
ccce can be
doubly exponential (in the number
of variables), each one involving exponentially many choices.
}
Then a
ccce
 consists of 5 conjuncts:
\begin{itemize}
\item one of $U=0$ and $U=1$;
\item one of $X=0\mid U=0$ and $X=1\mid U=0$;
\item one of $X=0\mid U=1$ and $X=1\mid U=1$;  
\item one of $Y=0\mid X=0$ and $Y=1\mid X=0$; and
 \item one of $Y=0\mid X=1$ and $Y=1\mid X=1$.
\end{itemize}
An fccce consist of only 4 conjuncts; it has only one of the second
and third conjuncts of a ccce.  In particular, if $U=0$ is a conjunct of the
fccce, then we
have neither  $X=0\mid U=1$ nor $X=1\mid U=1$  as a conjunct;
similarly, if $U=1$ is a conjunct, then 
we have neither $X=0\mid U=0$ nor $X=1\mid U=0$  as a conjunct.
(This is
what we meant above by saying that each exogenous parent $U$ of $X$ must
have the same value as in conjunct that determines $U$'s value.)
\end{example}

It is not hard to show that, {in this case, there are 32 ccces
  and 16 fccces.  Moreover, each fccce is equivalent to a
  disjunction of ccces (not just in this example, but in general).  
The number of
ccces and fccces}
can be as high as doubly exponential
(in the number of variables), each one involving exponentially many
choices.  For example, if a variable $Y$ has $n$ parents, each of them binary,
there are $2^n$ possible settings of the parents of $Y$, and we must
choose one value of $Y$ for each of these $2^n$ settings,
{\green already giving us $2^{2^n}$ choices}.
It is easy to see that there is also a double-exponential upper bound.

If we think of a conditional event of the form $Z=1\mid X=0, Y=0$ as saying
 ``if $X$ were (set to) 0 and $Y$ were (set to) 0, then $Z$
would be 1'', then given a
ccce
and a formula $\phi \in \Lcaus$ and context $\vec{u}$, we can
determine if $\phi$ is true or false.
{ We formalize this shortly.  We can then take the probability of
  $\phi$ to be the sum of   the probabilities of the
  ccces that 
  make $\phi$ true.   The probability of a
  ccce
  is determined by the corresponding entry of the cpt.  Thus, if 
we further assume independence, we can determine the probability of
each
ccce,
and hence the probability of any formula $\phi$.}  { We now give some
informal  examples of how this works, and then formalize the procedure
in Section~\ref{sec:CBNsemantics}.}

\begin{example}\label{xam:M*continued} 
In the CBN $M^*$ described in Example~\ref{xam:M*}, 
there are two
{ fccces
where $\phi = X=0 \land Y=0 \land [X \gets
  1](Y=1)$ is true: (a) $U=0 \land (X=0 \mid U=0)
\land (Y=0 \mid X=0) \land (Y=1 \mid X=1)$;
and  (b) $U=1 \land (X=0
\mid U=1) \land (Y=0 \mid X=0) \land (Y=1 \mid X=1)$.
Each of these two fccces is the disjunction of two ccces, which
extend the fccce by adding a fifth conjunct.  For example, for the first fccce,
we can add either the conjunct $X=0 \mid U=1$ or the conjunct $X=1 \mid U=1$. 
The total probability of these two
fccces
is $abd(1-e) + (1-a)cd(1-e)$; this is the probability
of $\phi$ in $M^*$.}
\end{example}

We give one more example of this calculation.

\begin{example}\label{xam:Mdag}
Consider the model CBN $M^\dag$, which differs from $M^*$ in that
now $U$ is also a parent of $Y$; the dag is shown below.  $M^*$ and
$M^\dag$ have the same 
cpts for $U$ and $X$; the cpt of $Y$ in $M^\dag$ is $P(Y=0 \mid
U=0, X=0) = f_1$, $P(Y=0 \mid U=0, X=1) = f_2$,
 $P(Y=0 \mid U=1, X=0) = f_3$, $P(Y=0 \mid U=1, X=1) = f_4$.
   \begin{center}
  \vspace{-4mm}
\begin{tikzpicture}
  \tikzstyle{var} = [circle, draw, thick]
  \node[var] (U) at (0,0) {U};
  \node[var] (X) at (2,0) {X};
    \node[var] (Y) at (1,-0.5) {Y};

  \draw[->, thick] (U) -- (X);
    \draw[->, thick] (U) -- (Y);
      \draw[->, thick] (X) -- (Y);
\end{tikzpicture}
\vspace{-5mm}
\end{center}
\commentout{
\begin{verbatim}
    U
   / \
  |   X
   \ /   
    Y
\end{verbatim}
}
{ Now there are 128 ccces, but only 16 fccces; 
the formula $\phi = X=0 \land Y=0 \land [X \gets
  1](Y=1)$ is true in only two of these fccces: (a) $U=0 \land
(X=0 \mid U=0) \land (Y=0 \mid (U=0, X=0)) \land 
(Y=1 \mid (U=0, X=1))$; and
(b) $U=1 \land )X=0
\mid U=1) \land (Y=0 \mid (U=1, X=0)) \land 
(Y=1 \mid (U=1, X=1))$.  It is easy to check that
$\Pr_{M^\dag}(\phi) = abf_1(1-f_2) + (1-a)cf_3(1-f_4)$.
The calculation of the probability of $\phi$ is
essentially the same in $M^*$ and $M^\dag$.
}
\end{example}

We denote by $\Pr_M(\phi)$ the probability of a formula $\phi$ in a 
CBN or causal model $M$.
(We provide a formal definition of $\Pr_M(\phi)$ for a CBN $M$ at the
end of Section~\ref{sec:semantics}.)

\subsection{Giving semantics to formulas in CBNs}\label{sec:CBNsemantics}

{ We already hinted in Examples~\ref{xam:M*continued}  and~\ref{xam:Mdag}
how we give semantics to formulas in CBNs.  We now formalize this.

The first step is to show that a
ccce (resp., fccce) determines the
  truth of a formula in $\Lcaus^+(\S)$ (resp., $\Lcaus(\S)$) in a
  causal model.    To make this precise,
  we need a few definitions. 
  We take the \emph{type} of a CBN $M = (\S, \P)$, where $\S=(\U,
  \V,\R)$ to consist of its 
  signature $\S$ and, for each endogenous variable, a list of its
  parents (which is essentially given by the dag associated with $M$,
  without the cpts).  A causal model $M' = (\S',\F')$ has the same type as 
$M$ if $\S' = (\U \union \U', \V, \R')$, where 
  $\U'$ is arbitrary, $\R'|_{\U \union \V} = \R$, and $\F'$ is
  such that each endogenous variable $X$ depends on the same
  variables in $\U \union \V$ according to $\F'$ as it does according
  to the type of $M$ (but may also depend on any subset of $\U'$).  

\begin{definition}\label{dfn:corresponding}
  For the conditional event $Y=y \mid (X_1 = x_1, \ldots, X_m =
  x_m)$, let the \emph{corresponding formula} be $[X_1 \gets
    x_1, \ldots, X_m \gets x_m](Y=y)$.  (Note that the corresponding
  formula may be in $\Lcaus^+ - \Lcaus$, since some of the $X_i$s
  may be exogneous.)
Let $\phi_\alpha \in \Lcaus^+(\S)$, the formula
  corresponding to the ccce $\alpha$,
be the conjunction of the formulas corresponding to the conditional
  events in $\alpha$.  We can similarly define the formula corresponding
  to an fccce.
\end{definition}
}

\begin{example}  In the model $M^\dag$ of Example~\ref{xam:Mdag},
if $\alpha$ is the fccce $U=0 \land (X=0
\mid U=0) \land (Y=0 \mid (U=0, X=0)) \land (Y=1 \mid (U=0, X=1))$,
then $\phi_\alpha$ is $U=0 \land [U\gets 0]X=0 \land [U\gets 0, X \gets
  0](Y=0) \land [U \gets 0, X \gets 1](Y=1)$.
\end{example}

 Say that a formula $\psi$ is \emph{valid with respect to a CBN $M$} if
  $(M',\vec{u}) \sat \psi$ for all causal settings $(M',\vec{u})$, where $M'$ is
  a causal model with the same type as $M$.
The following theorem makes precise the sense in which a
ccce determines whether or not an arbitrary formula is true.

 \begin{theorem}\label{ccce} Given a CBN $M = (\S,\P)$ and a
  ccce (resp., fccce) $\alpha$, 
  then  for all formulas {$\psi \in \Lcaus^+(\S)$ (resp., $\psi
    \in \Lcaus(\S)$})
either $\phi_\alpha
\Rightarrow \psi$ is valid with respect to $M$ or $\phi_\alpha
\Rightarrow \neg \psi$ is valid with respect to $M$.
  \end{theorem}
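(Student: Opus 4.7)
The plan is to prove the theorem by structural induction on $\psi$, reducing it to a single key lemma: whenever $(M',\vec{u}') \sat \phi_\alpha$ with $M'$ a causal model of the same type as $M$, the value of every endogenous variable in the model $M'_{\vec{Y} \gets \vec{y}}$ under context $\vec{u}'$ is uniquely determined by $\alpha$ (independent of the choice of $M'$ and $\vec{u}'$). Once this is established, the truth of any primitive event $X=x$ in $(M'_{\vec{Y} \gets \vec{y}},\vec{u}')$ depends only on $\alpha$, so the same holds for Boolean combinations of primitive events, and then for causal formulas $[\vec{Y} \gets \vec{y}]\phi$, and finally for arbitrary Boolean combinations of these.

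I would prove the key lemma by a sub-induction on the topological order $\prec$ associated with the dag of $M$. For an exogenous variable $U \in \U$, the ccce (resp.\ fccce) contains a conjunct $U=u$ for each $U$, and this pins its value down; if the intervention sets $U$, the intervention overrides. For an endogenous variable $X \notin \vec{Y}$, by the inductive hypothesis the values of its parents in $\U \cup \V$ are known (either because they are set by the intervention, or by induction on $\prec$); call this setting $\vec{x}$. The ccce contains a conditional event $X = y_X \mid \vec{X}_{\mathrm{par}} = \vec{x}$, whose corresponding formula $[\vec{X}_{\mathrm{par}} \gets \vec{x}](X = y_X)$ is a conjunct of $\phi_\alpha$ and therefore true in $(M',\vec{u}')$. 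Because the value of $X$ in $M'_{\vec{X}_{\mathrm{par}} \gets \vec{x}}$ under $\vec{u}'$ is just $F_X^{M'}$ applied to $\vec{x}$ and the components of $\vec{u}'$ corresponding to $\U' \setminus \U$, and because $F_X^{M'}$ is a function, this value is exactly $y_X$, and moreover it coincides with the value $X$ takes in $M'_{\vec{Y} \gets \vec{y}}$ at the stage where its $\U \cup \V$ parents have been fixed to $\vec{x}$ and its other equations have not been disturbed.

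For the fccce version restricted to $\Lcaus(\S)$, the same argument works but with one refinement: an fccce includes conditional events only for settings of the parents in which each exogenous parent $U$ takes the specific value that the fccce assigns to $U$. Since formulas in $\Lcaus(\S)$ neither mention exogenous primitive events nor intervene on exogenous variables, the exogenous variables always keep their $\alpha$-values throughout the evaluation, so the inductive lookup only ever consults conditional events that are present in the fccce. This is exactly the role of the ``fixed-context'' restriction.

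The main obstacle is the extra exogenous variables in $\U' \setminus \U$. Because the type only constrains dependence on $\U \cup \V$, different causal models $M'$ of the same type can realize the same conditional events in $\alpha$ via wildly different structural equations that also depend on $\U'$. The argument above works because we never need to compare $M'$ and $M''$ directly: for each fixed $(M',\vec{u}')$ satisfying $\phi_\alpha$, the particular dependence of $F_X^{M'}$ on $\U' \setminus \U$ is ``collapsed'' by fixing $\vec{u}'$, and the resulting value is forced to equal the $\alpha$-value by the conjunct $[\vec{X}_{\mathrm{par}} \gets \vec{x}](X = y_X)$. Verifying this collapse carefully, and checking that interventions in $\vec{Y}$ do not disturb the equations consulted in the inductive step, is the only non-routine part of the proof.
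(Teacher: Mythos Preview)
Your proposal is correct and follows essentially the same approach as the paper: an induction along the topological order of the dag showing that, for any intervention $\vec{Y}\gets\vec{y}$, the value of each variable in $(M'_{\vec{Y}\gets\vec{y}},\vec{u}')$ is pinned down by the conjunct of $\phi_\alpha$ corresponding to the parents' already-determined values, with the fccce case handled by observing that $\Lcaus(\S)$ never perturbs the exogenous components. The only cosmetic difference is that the paper phrases the key lemma as ``any two causal settings satisfying $\phi_\alpha$ agree on every $\psi$'' rather than ``$\alpha$ determines the value of every variable under every intervention,'' but the inductive argument and the treatment of the extra exogenous variables in $\U'\setminus\U$ are the same.
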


\fullv{
{\purple \prf  We show that if two causal models
  $M_1$ and $M_2$ have the same type as $M$ and 
    $\vec{u}_1$ and $\vec{u}_2$ are contexts
  such that   $(M_1,\vec{u}_1) \sat \phi_\alpha$ and $(M_2,\vec{u}_2) \sat
  \phi_\alpha$, then for all formulas {\pink $\psi \in  \Lcaus^+(\S)$
(resp., $\psi \in \Lcaus(\S)$)}, we have that 
\begin{equation}\label{eq1}
\mbox{$(M_1,\vec{u}_1) \sat \psi$ iff $(M_2,\vec{u}_2) \sat
  \psi$.}
\end{equation}
  The claimed result follows immediately.

{\pink We give the proof in the case that $\alpha$ is a ccce and $\psi \in
  \Lcaus^+(\S)$.  The modifications needed to deal with the case that
$\alpha$ is an fccce and $\psi \in \Lcaus(\S)$ are straightforward and
  left to the reader.}
  Since $M$ is acyclic, we can order the {\pink exogenous and endogenous} variables
topologically.  Let $X_1, \ldots, X_m$ be such an ordering.  We first
prove 
by induction on $j$ that, for all interventions $\vec{Y} \gets
\vec{y}$ (including the empty intervention) and $x_j \in \R(X_j)$, 
$(M_1,\vec{u}_1) \sat [\vec{Y} \gets \vec{y}](X_j = x_j)$ iff
$(M_2,\vec{u}_2) \sat [\vec{Y} \gets \vec{y}](X_j = x_j)$.

For the
base case, {\pink $X_1$ must be exogenous, and hence have no parents.}
If $X_1$ is not one of the variables in $\vec{Y}$,
then we must have  $(M_1,\vec{u}_1) \sat [\vec{Y} \gets
    \vec{y}](X_1 = x_1)$ iff $(M_1,\vec{u}_1) \sat (X_1 = x_1)$, and
similarly for $M_2$; since
no variable in $\vec{Y}$ is a parent of $X_1$, intervening on $\vec{Y}$
has no effect on $X_1$.  Since $(M_1,\vec{u}_1) \sat \phi_\alpha$ and
$(M_2,\vec{u}_2) \sat \phi_\alpha$,
$M_1$ and $M_2$ agree on the values of variables in $\U$.  Thus,
$(M_1,\vec{u}_1) \sat (X_1 = x_1)$ iff
$(M_2,\vec{u}_2) \sat (X_1 = x_1)$.
It follows that $(M_2,\vec{u}_2) \sat [\vec{Y} \gets   \vec{y}](X_1 = x_1)$.
$(M_2,\vec{u}_2) \sat [\vec{Y} \gets   \vec{y}](X_1 = x_1)$, as desired.

On the other hand, if $X_1$ is one of the variables in $\vec{Y}$
(which can happen only if the formula is in $\Lcaus^+(\S)$),
let $x^*$ be the value in $\vec{y}$ corresponding to $X_1$.  In that
case, the formula $[\vec{Y} \gets   \vec{y}](X_1 = x^*)$ is valid with
respect to
$M$. It follows that $(M_1,\vec{u}_1) \sat [\vec{Y} \gets
  \vec{y}](X_1 = x_1)$ iff $x_1 = x^*$, and similarly for $(M_2 ,
\vec{u}_2)$.  The desired result follows.  This completes the proof for
the base case. 

Now suppose that we have proved the
result for $j < m$.  Let $Z_1, \ldots, Z_k$ be the
parents of $X_{j+1}$ in $M$.  Since $X_1, \ldots, X_m$ is a
topological sort, we must have  $\{Z_1, \ldots, Z_k\} \subseteq \{X_1,
\ldots, X_j\}$.  Let $z_1, \ldots, z_k$ be values in $\R(Z_1), \ldots,
\R(Z_k)$, respectively, such that
$(M_1,\vec{u}_1) \sat [\vec{Y} \gets  \vec{y}](Z_h = z_h)$, for $h= 1,
\ldots, k$.  By the induction hypothesis,
$(M_2,\vec{u}_2) \sat [\vec{Y} \gets  \vec{y}](Z_h =
z_h)$, for $h= 1, \ldots, k$.
Moreover, it is easy to see that
$([\vec{Y} \gets \vec{y}]\phi \land [\vec{Y} \gets \vec{y}]\phi'))
\dimp  [\vec{Y} \gets \vec{y}](\phi \land \phi')$ is valid with
respect to $M$.
Thus, $(M_1,\vec{u}_1) \sat [\vec{Y} \gets  \vec{y}](Z_1 = z_1 \land
\ldots Z_k = z_k)$ and similarly for $(M_2,\vec{u}_2)$.  Moreover,
since $Z_1, \ldots, Z_k$ are the
parents of $X_{j+1}$,
it follows that  $(M_1,\vec{u}_1) \sat [\vec{Y} \gets
  \vec{y}](X_{j+1} = x_{j+_1})$ iff $[Z_1 = z_1 \land
\ldots Z_k = z_k)](X_{j+1} = x_{j+_1})$ is a conjunct of $\phi_\alpha$.
Since $(M_1,\vec{u}_1) \sat \phi_\alpha$ and $(M_2,\vec{u}_2) \sat
\phi_\alpha$, the desired result follows, completing the induction proof.

The argument that $(M_1,\vec{u}_1) \sat [\vec{Y} \gets \vec{y}]\psi$ iff
$(M_2,\vec{u}_2) \sat [\vec{Y} \gets \vec{y}]\psi$ for arbitrary
(simple) formulas $\psi$ now follows from the fact that (as we already
observed)
$([\vec{Y} \gets \vec{y}]\phi \land [\vec{Y} \gets \vec{y}]\phi'))
\dimp  [\vec{Y} \gets \vec{y}](\phi \land \phi')$ is valid with
respect to $M$, as are
$([\vec{Y} \gets \vec{y}]\phi \lor [\vec{Y} \gets \vec{y}]\phi'))
\dimp  [\vec{Y} \gets \vec{y}](\phi \lor \phi')$
and 
$[\vec{Y} \gets \vec{y}]\neg\phi \dimp
\neg [\vec{Y} \gets \vec{y}]\phi$.  

Finally, we can deal with Boolean combinations of causal formulas by a
straightforward induction.  This completes the argument that
(\ref{eq1}) holds for all formulas in $\psi \in \Lcaus^+(\S)$.
\eprf}
}


{ \prf  We show that if two causal models
  $M_1$ and $M_2$ have the same type as $M$ and 
    $\vec{u}_1$ and $\vec{u}_2$ are contexts
  such that   $(M_1,\vec{u}_1) \sat \phi_\alpha$ and $(M_2,\vec{u}_2) \sat
  \phi_\alpha$, then for all formulas  $\psi \in  \Lcaus^+(\S)$
(resp., $\psi \in \Lcaus(\S)$), we have that 
    \begin{equation}\label{eq1}
\mbox{$(M_1,\vec{u}_1) \sat \psi$ iff $(M_2,\vec{u}_2) \sat
  \psi$.}
  \end{equation}
    The claimed result follows immediately.

{ We give the proof in the case that $\alpha$ is a ccce and $\psi \in
  \Lcaus^+(\S)$.  The modifications needed to deal with the case that
$\alpha$ is an fccce and $\psi \in \Lcaus(\S)$ are straightforward and
  left to the reader.}
  Since $M$ is acyclic, we can order the  exogenous and endogenous variables
topologically.  Let $X_1, \ldots, X_m$ be such an ordering.  We first prove
by induction on $j$ that, for all interventions $\vec{Y} \gets
\vec{y}$ (including the empty intervention) and $x_j \in \R(X_j)$, 
$(M_1,\vec{u}_1) \sat [\vec{Y} \gets \vec{y}](X_j = x_j)$ iff
$(M_2,\vec{u}_2) \sat [\vec{Y} \gets \vec{y}](X_j = x_j)$.

For the
base case, $X_1$ must be exogenous, and hence have no parents.
If $X_1$ is not one of the variables in $\vec{Y}$,
then we must have  $(M_1,\vec{u}_1) \sat [\vec{Y} \gets
    \vec{y}](X_1 = x_1)$ iff $(M_1,\vec{u}_1) \sat (X_1 = x_1)$, and
similarly for $M_2$; since
no variable in $\vec{Y}$ is a parent of $X_1$, intervening on $\vec{Y}$
has no effect on $X_1$.  Since $(M_1,\vec{u}_1) \sat \phi_\alpha$ and
$(M_2,\vec{u}_2) \sat \phi_\alpha$,
$M_1$ and $M_2$ agree on the values of variables in $\U$.  Thus,
$(M_1,\vec{u}_1) \sat (X_1 = x_1)$ iff
$(M_2,\vec{u}_2) \sat (X_1 = x_1)$.
It follows that $(M_2,\vec{u}_2) \sat [\vec{Y} \gets   \vec{y}](X_1 = x_1)$.
$(M_2,\vec{u}_2) \sat [\vec{Y} \gets   \vec{y}](X_1 = x_1)$, as desired.

On the other hand, if $X_1$ is one of the variables in $\vec{Y}$
(which can happen only if the formula is in $\Lcaus^+(\S)$),
let $x^*$ be the value in $\vec{y}$ corresponding to $X_1$.  In that
case, the formula $[\vec{Y} \gets   \vec{y}](X_1 = x^*)$ is valid with
respect to
$M$. It follows that $(M_1,\vec{u}_1) \sat [\vec{Y} \gets
  \vec{y}](X_1 = x_1)$ iff $x_1 = x^*$, and similarly for $(M_2 ,
\vec{u}_2)$.  The desired result follows.  This completes the proof for
the base case. 

Now suppose that we have proved the
result for $j < m$.  Let $Z_1, \ldots, Z_k$ be the
parents of $X_{j+1}$ in $M$.  Since $X_1, \ldots, X_m$ is a
topological sort, we must have  $\{Z_1, \ldots, Z_k\} \subseteq \{X_1,
\ldots, X_j\}$.  Let $z_1, \ldots, z_k$ be values in $\R(Z_1), \ldots,
\R(Z_k)$, respectively, such that
$(M_1,\vec{u}_1) \sat [\vec{Y} \gets  \vec{y}](Z_h = z_h)$, for $h= 1,
\ldots, k$.  By the induction hypothesis,
$(M_2,\vec{u}_2) \sat [\vec{Y} \gets  \vec{y}](Z_h =
z_h)$, for $h= 1, \ldots, k$.
Moreover, it is easy to see that
$([\vec{Y} \gets \vec{y}]\phi \land [\vec{Y} \gets \vec{y}]\phi'))
\dimp  [\vec{Y} \gets \vec{y}](\phi \land \phi')$ is valid with
respect to $M$.
Thus, $(M_1,\vec{u}_1) \sat [\vec{Y} \gets  \vec{y}](Z_1 = z_1 \land
\ldots Z_k = z_k)$ and similarly for $(M_2,\vec{u}_2)$.  Moreover,
since $Z_1, \ldots, Z_k$ are the
parents of $X_{j+1}$,
it follows that  $(M_1,\vec{u}_1) \sat [\vec{Y} \gets
  \vec{y}](X_{j+1} = x_{j+_1})$ iff $[Z_1 = z_1 \land
\ldots Z_k = z_k)](X_{j+1} = x_{j+_1})$ is a conjunct of $\phi_\alpha$.
Since $(M_1,\vec{u}_1) \sat \phi_\alpha$ and $(M_2,\vec{u}_2) \sat
\phi_\alpha$, the desired result follows, completing the induction proof.

The argument that $(M_1,\vec{u}_1) \sat [\vec{Y} \gets \vec{y}]\psi$ iff
$(M_2,\vec{u}_2) \sat [\vec{Y} \gets \vec{y}]\psi$ for arbitrary
(simple) formulas $\psi$ now follows from the fact that (as we already
observed)
$([\vec{Y} \gets \vec{y}]\phi \land [\vec{Y} \gets \vec{y}]\phi'))
\dimp  [\vec{Y} \gets \vec{y}](\phi \land \phi')$ is valid with
respect to $M$, as are
$([\vec{Y} \gets \vec{y}]\phi \lor [\vec{Y} \gets \vec{y}]\phi'))
\dimp  [\vec{Y} \gets \vec{y}](\phi \lor \phi')$
and 
$[\vec{Y} \gets \vec{y}]\neg\phi \dimp
\neg [\vec{Y} \gets \vec{y}]\phi$.  

Finally, we can deal with Boolean combinations of causal formulas by a
straightforward induction.  This completes the argument that
(\ref{eq1}) holds for all formulas in $\psi \in \Lcaus^+(\S)$.
\eprf}


{ Based on this result, we can take the probability of a formula
  $\phi \in \Lcaus^+(\S)$ in a 
CBN $M$ to be the probability of the ccces that imply it.
To make this precise, given  a CBN $M$, say that a probabilistic
causal model $(M',\Pr)$ is 
\emph{compatible} with $M$ if $M'$ has the same type as $M$, and the
probability $\Pr$ is such that all the cpts in $M$ get the right
probability in $M$.  More precisely, for each endogenous variable $Y$
in $M$, if $X_1, \ldots, X_k$ are the parents of $Y$ in $M$, then
for each entry $P(Y=y \mid X_1 = x_1, \ldots, X_k = x_k) = a$ in the
cpt for $Y$, $\Pr$ is
such that the corresponding formula $[X_1 \gets x_1, \ldots, X_k \gets
  x_k](Y=y)$ gets probability $a$.  $(M',\Pr)$ is \emph{i-compatible}
with $M$ (the \emph{i} stands for \emph{independence}) if it is
compatible with $M$ and, in addition,  $\Pr$ is such that
the events described by the formulas corresponding to entries for cpts
for different variable
(i.e. the set of contexts in $M$ that make these formulas true) are
independent, as are the events described by the formulas corresponding
to different entries for the cpt for a given variable.  Thus, for
example, if $(x_1', \ldots, x_k') \ne (x_1, \ldots, x_k)$, then we
want the events described by $[X_1 \gets x_1, \ldots, X_k \gets
  x_k](Y=y)$ and $[X_1 \gets x_1', \ldots, X_k \gets x_k'](Y=y)$ to
be independent (these are different entries of the cpt for $Y$); and
if $Y' \ne Y$ and has parents $X_1', \ldots, X_m'$ in $M$, then we
want the events described by $[X_1 \gets x_1, \ldots, X_k \gets
  x_k](Y=y)$ and $[X_1' \gets x_1', \ldots, X_m \gets
  x_m'](Y'=y')$ to be independent (these are entries of cpts for
different variables).

\begin{theorem}\label{thm:same} Given a CBN $M$ and a formula $\phi
  \in \Lcaus^+(\S)$, 
  the probability of $\phi$ is the same in all causal models $M'$
  i-compatible with $M$.\label{thm:icompatible}
  \end{theorem}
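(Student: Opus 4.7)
The plan is to express $\Pr_{M'}(\phi)$ as a sum, over ccces, of quantities that depend only on the cpts of $M$, not on the particular i-compatible $M'$. The key ingredients are Theorem~\ref{ccce} (each ccce determines the truth value of $\phi$) and the i-compatibility assumption (which fixes the probability of each ccce).

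First I would show that, for any causal model $M'$ with the same type as $M$ and any context $\vec{u}$ of $M'$, the setting $(M',\vec{u})$ satisfies exactly one ccce $\alpha$. For existence: for each endogenous variable $Y$ with parents $X_1,\ldots,X_m$ in $M$ and each setting $(x_1,\ldots,x_m)$ of those parents, the causal model $M'$ together with $\vec{u}$ determines a unique value $y$ such that $(M',\vec{u}) \sat [X_1\gets x_1,\ldots,X_m\gets x_m](Y=y)$ (since $M'$ is acyclic and $\vec{u}$ fixes the extra exogenous parents in $\U'$); the values of the exogenous variables in $\U$ are similarly fixed by $\vec{u}$. Collecting these choices yields a ccce $\alpha$ with $(M',\vec{u}) \sat \phi_\alpha$. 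Uniqueness is immediate, since different ccces disagree on at least one conditional event. Thus the events $\{\phi_\alpha : \alpha \text{ a ccce}\}$ partition the context space of $M'$.

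Combining this partition with Theorem~\ref{ccce}, I would conclude
\begin{equation*}
\Pr_{M'}(\phi) = \sum_{\alpha\,:\,\phi_\alpha \Rightarrow \phi \text{ valid w.r.t. } M} \Pr_{M'}(\phi_\alpha).
\end{equation*}
The set of ccces appearing in this sum depends only on the type of $M$ (and on $\phi$), not on $M'$. It then remains to check that each $\Pr_{M'}(\phi_\alpha)$ depends only on the cpts of $M$. But $\phi_\alpha$ is a conjunction of formulas corresponding to cpt entries of $M$, one entry per endogenous variable and per setting of its parents. By compatibility, each such conjunct has probability equal to the corresponding cpt entry of $M$ under $\Pr$; by i-compatibility, these conjuncts correspond to mutually independent events in $(M',\Pr)$. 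Therefore $\Pr_{M'}(\phi_\alpha)$ is the product over the conjuncts of $\alpha$ of the corresponding cpt entries of $M$, a quantity determined entirely by $M$.

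Putting the two displays together gives a closed-form expression for $\Pr_{M'}(\phi)$ in terms of the cpts of $M$ alone, so the probability is identical across all i-compatible causal models $M'$. The main subtlety I anticipate is the partition step: one needs to be careful that the possibly extra exogenous parents in $\U'$ do not spoil the uniqueness of the ccce satisfied by a context, and that the intervention semantics in $M'$ really reduces to reading off the values of $Y$ for each hypothetical parent setting, which is exactly what Theorem~\ref{ccce} and the acyclic structure guarantee.
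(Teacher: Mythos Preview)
Your proposal is correct and follows essentially the same route as the paper: use Theorem~\ref{ccce} to write $\Pr_{M'}(\phi)$ as a sum of $\Pr_{M'}(\phi_\alpha)$ over the ccces $\alpha$ that imply $\phi$, then observe that i-compatibility forces each $\Pr_{M'}(\phi_\alpha)$ to be the product of the corresponding cpt entries of $M$. The paper's proof is a two-sentence sketch that asserts both steps without justification; you have usefully spelled out the partition step (each context satisfies exactly one ccce) and the product computation, which are precisely the details the paper leaves implicit.
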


\prf
It follows from Theorem~\ref{ccce} that the probability of $\phi$ is
the sum of the probabilities of the formulas $\phi_\alpha$ for the
ccces $\alpha$ such that $\phi_\alpha \Rightarrow \phi$ is valid.  It is
immediate that these formulas have the same probability in all causal
models i-compatible with $M$. \eprf

Formally, we take $\Pr_M(\phi)$, the probability of $\phi$ in the CBN
$M$,  to be $\Pr_{M'}(\phi)$ for a causal model  $M'$   i-compatible with 
$M$.  By Theorem~\ref{thm:same}, it does not matter which causal model
$M'$ i-compatible with $M$ we consider.
Note for future reference
that if we had considered only causal models compatible with $M$,
dropping the independence assumption, we would have gotten a range of
probabilities.  }

\subsection{Discussion}\label{sec:discussion}
{\blue
  Four points are worth making: First, note that this way of assigning
  probabilities in a CBN $M$ 
  always results in the probability of a formula $\phi \in \Lcaus^+$
  being a sum of 
  products of entries in the cpt.  Thus, we can in principle compute
    the probabilities of (conditional) events involving
  interventions from observations of statistical frequencies (at
  least, as long as all settings of the parents of a variable in the
  relevant entries of the cpt have positive probability).

  Second, the number of
  ccces
  may make the
computation of the probability of a formula in a CBN seem unacceptably
high.  As the examples above shows, in practice, it is not so bad.  For
example,  we typically do not actually have to deal with
ccces.
{For one thing, it follows from Theorem~\ref{ccce} that to
  compute the probability of $\phi \in \Lcaus$, it suffices to
  consider fccces.  Moreover,
when computing $\Pr_M(\phi)$ where
$\phi$ involves an intervention of the form $X \gets x$, we can ignore 
the entries in the cpts involving $X$, and for variables for which $X$
is a parent, we consider only entries in the cpts where $X=x$.  
We  can also take advantage of the structure of the formula whose
probability we are interested in computing to further simplify the
computation, although the details are beyond the scope of this paper.
}
  
{\red 
    Third, as mentioned above, a formula involving interventions does not
correspond
in an obvious way to an event in a CBN, but it 
does correspond to an event in a (functional)
causal model.  The 
key point is that in a causal model, a context not only determines a
state; it determines a state for every intervention.
\fullv{
(Indeed, this is
what Peters and Halpern \citeyear{PH21} took to be the key feature of
a context, and how they generalized contexts in what they called
\emph{GSEMs}---Generalized Structural Equation Models.%
{\blue \footnote{Actually, Peters and Halpern allow a context to determine a
\emph{set} of states, not necessarily a single state, since they want
to generalize causal models that may not be recursive.}})
}
We can view
a formula involving interventions as an event in a space whose elements are
functions from interventions to worlds.  Since a context can be viewed
this way, we can view a formula involving interventions
as an event in such a space.  This makes
conditioning on arbitrary formulas in $\Lcaus^+$ (with positive
probability) in causal models well defined.
By way of contrast, in a CBN, we can view a context as a function from
interventions to distributions over worlds.
Finally, it is worth asking how reasonable is the assumption that cpts
are independent, that is, considering i-compatible causal models rather
than just compatible causal models, which is what seems to have been
done elsewhere in the literature (see, e.g., \cite{BP94,TP00}).}

As we said, Halpern and Richardson \citeyear{HR23} have shown that the
assumption that cpts involving different variables are independent is
equivalent to the (conditional) independence assumptions made in
Bayesian networks.  More
precisely, given a CBN $M$, let $M'$ be the non-probabilistic causal
model constructed above.  Then it is shown that if the
probability $\Pr'$ 
makes interventions on different variables independent (i.e., if
$\Pr'(\vec{U}, f_1, \dots, f_m) = \Pr(\vec{u}) \times \Pr_{Y_1}(f_1)
\times \cdots \times \Pr_{Y_m}(f_m)$, as in our construction), then 
all the conditional independencies implied by
d-separation  hold in $(M,\Pr')$ (see \cite{Pearl} for the formal definition of
d-separation and further discussion).   Conversely,
if all the dependencies implied by d-separation hold in $(M,\Pr')$, 
then $\Pr'$ must make interventions on different variables
independent.

This result says nothing about making interventions for different settings of
the parents of a single variable independent.  This is relevant only
if we are interested in computing the probability of
formulas
such as $X=0 \land Y=0 \land [X \gets
  1](Y=1)$, for which we need to consider (simultaneously) the cpt for
$Y$ when $X=0$ and when $X=1$.  As discussed earlier, independence
is reasonable in this case if we can observe all the parents
of a variable $Y$, and thus screen off $Y$ from 
the effects of all other variables (and other settings of the
parents).
We cannot always assume this, but in many realistic circumstances, we can.

\commentout{
Pearl \citeyear[Section 9.2.2]{pearl:2k} gives 
bounds under the 
assumption that a variable $Y$ is \emph{exogenous  relative to a
  parent $X$}. This means that the cpt for $Y$ given
different settings of $X$ are independent, conditional on $Y$.  Pearl
rightly takes this to be a nontrivial assumption, which we would not
always expect to hold.
We believe that our independence assumption is much more reasonable
\new{whenever exogenous parents of $Y$ can be measured. } 
}



\commentout{
But for a CBN $M$, taking the probability 
of $\psi \land \psi'$ in $(M,\vec{u})$ to be
the product of the probability of $\psi$ in $(M,\vec{u})$ and the
probability of $\psi'$ in $(M,\vec{u})$ still does not seem quite right.
Consider a simple CBN with one exogenous variable $U$ and three endogenous
variables $X$, $Y$, and $Z$, all binary, where $U$ is the parent of $X$
and $Y$, which are the parents of $Z$, as shown in the following figure:

\begin{verbatim}
    U
   / \
  X   Y
   \ /   
    Z
\end{verbatim}

Again, we want to compute the probability of the formula $\phi =
Y=0 \land [Y \gets 1](Z=0)$.

probability of $\phi$ in the context $U=0$:
$\Pr(Y=0 \mid U = 0) = \Pr(X= 0 \mid U=0) = 2/3$,
$\Pr(Z=0 \mid X= 0, Y= 1) = 2/3$, $\Pr(Z=0 \mid X= 1, Y=1) = 1/3$.
It is now easy to see that in the context $U=0$, $Y=0$ has probability
$2/3$, and $[Y \gets 1](Z=0)$ has probability $2/3 \times 2/3 + 1/3
\times 1/3 = 5/9$.  We do not want to take the probability of $\phi$
to be $2/3 \times 5/9$.  
The value of $Z$ depends on the value of $X$, which is determined by
the state.   So, we want to assume that $Y=0$ and $[Y \gets 1](Z=0)$
are independent, conditional on the state.

There are four states with $X=0$ (which differ in the values for
$Y$ and $Z$) and four states with $X=1$.  $Y=0$ in two of the
states with $X=0$, and they have total probability $2/3 \times 2/3$, and $[Y
  \gets 1](X=0)$ has probability $2/3$ in each of these.  
The way we compute the latter probability in a state $w$ in $M$ is to set
$Y=1$ to get the model $M_{Y \gets 1}$, and then 
apply the cpt for $Z$ to get the probability that $Z=0$ in the
resulting states in $M_{Y \gets 1}$.  
Since we are considering states where $X=0$, we fix $X=0$ when
computing the probability that 
$Z=0$.  Thus, the total probability of $[Y \gets 1](X=0)$ in states
where $X=0$ is $2/3 \times 2/3 \times 2/3 = 8/27$.
A similar computation in states where $X=1$ shows that the probability
of $[Y \gets 1](X=0)$ in these states is $1/3 \times 2/3 \times 1/3 =
2/27$.  Thus, we take the probability of $[Y \gets 1](X=0)$ in
$\vec{u}$ to be $10/27$.

We view it as more reasonable to condition on the state here
because we take the mechanisms that define interventions to work
independently once we have fixed all other factors that determine how
the intervention works; in this case, the value of $Z$ affects how the
intervention works.  

{\red This is the procedure we use to compute the probability both for the
conjunction of a simple formula and a causal formula involving
an intervention and for 
the conjunction of two formulas involving interventions in a state $w$
as long as 
(a) none of the formulas being intervened on in the intervention
formula(s) is true   in $w$ and (b) all the interventions involved are
distinct (up to reordering).  To understand the first point, 
suppose that we are interested in an intervention
formula such as $[Y_1 \gets y_1, Y_2 \gets y_2]\phi$, where $Y_1 =
y_1$ in $w$ but $Y_2 \ne y_2$ in $w$.    In that case, there is no
need to intervene on $Y_1$; it already has the right value in $w$.  We
thus work with the formula $[Y_2 \gets y_2]\phi$ instead.  We
implicitly are taking the formulas $[Y_1 \gets y_1, Y_2 \gets
  y_2]\phi$ and $[Y_2 \gets y_2]\phi$ to be equivalent in $w$.

With regard to the second point, while our assumption of independence
seems reasonable if the interventions involved are distinct, it does
not seem reasonable if the interventions are identical.
%
%
Since the same intervention is involved in a formula such as 
$[\vec{Y} \gets \vec{y}]\phi \land [\vec{Y} \gets
\vec{y}]\phi'$; are identical, the outcomes might be correlated.  We 
identify this formula with $[\vec{Y} \gets \vec{y}](\phi \land
\phi')$; this allows us to deal with correlation.  Note that
the order that interventions are performed in does not effect the
outcome (whichever order the interventions are performed in, we
consider the model where the equations for the variables being
intervened on are replaced by the value that these variables get in the
intervention), so we identify formulas such as $[Y_1 \gets y_1, Y_2 \gets
  y_2]\phi$ and $[Y_2 \gets y_2, Y_1 \gets y_1]\phi$; for simplicity,
we assume that the variables in an intervention are always listed in
some canonical order.}  

As this discussion should make clear, in giving semantics to these
formulas, we made a number of choices.   While we think these choices
are reasonable, they are not obviously ``right'' (whatever ``right''
means); {\red the one perhaps most open to dispute is the assumption of
independence}.  We now provide the technical details.

\subsection{Technical details}\label{sec:technical}

In order to assign a probability to a
formula in $\Lcaus$, the first step is to convert it to an equivalent
formula in a canonical form.  This canonical form looks like a DNF
formula.  Specifically, it has the form $\phi_1 \lor \cdots \lor
\phi_k$, where the $\phi_j$s are mutually exclusive and 
each $\phi_j$ is a conjunction of the form $\phi_{j0}
\land \cdots \land \phi_{jm}$, where $\phi_{j0}$ is a simple formula
and for $1 \le k \le m$, $\phi_{jk}$ is an intervention formula of the
form $[\vec{Y} \gets \vec{y}]\psi$, and the interventions are all
distinct.  This conversion just involves standard propositional
reasoning and two further assumptions.  The
first was already mentioned above: that $[\vec{Y} \gets \vec{y}]\phi
\land [\vec{Y} \gets \vec{y}]\phi'$ is equivalent to 
$[\vec{Y} \gets \vec{y}](\phi \land \phi')$.  The second is that
$\neg [\vec{Y} \gets \vec{y}]\phi$ is equivalent to
$[\vec{Y} \gets \vec{y}]\neg \phi$. These two formulas are easily seen
to have the same probability, according to the semantics sketched
above.  The assumption is that we can replace one by the other in all
contexts, which  seems (at least to us) quite reasonable.

Ignore for now the requirement that the disjuncts be mutually exclusive, 
that all interventions be distinct, and that there be no leading
negations in intervention formulas.  Using standard propositional
reasoning, we can transform a formula $\phi$ to an equivalent formula
in DNF, where the literals are either simple formulas or
intervention formulas.  Of course, the disjuncts may not be mutually
exclusive.  Again, using straightforward propositional reasoning, we
can further the formula to a DNF where the disjuncts are mutually
exclusive.  Rather than writing out the tedious details, we give an
example.  Consider a formula of the form $(\phi_1 \land \phi_2) \lor
(\phi_3 \land \phi_4)$.  This is propositionally equivalent to
$$\begin{array}{l}
(\phi_1 \land \phi_2 \land \phi_3 \land \phi_4) \lor
(\phi_1 \land \phi_2 \land \neg \phi_3 \land \phi_4) \lor
(\phi_1 \land \phi_2 \land \phi_3 \land \neg \phi_4) \lor
(\phi_1 \land \phi_2 \land \neg \phi_3 \land \neg \phi_4)\\ \lor
(\neg \phi_1 \land \phi_2 \land \phi_3 \land \phi_4) \lor
(\phi_1 \land \neg \phi_2 \land \phi_3 \land \phi_4) \lor
  (\neg \phi_1 \land \neg \phi_2 \land \phi_3 \land \phi_4);
  \end{array}$$
moreover, this formula is in the right form.
We can now  apply the two equivalences
mentioned above to remove leading negations from intervention
formulas and to ensure that, in each disjunct, all interventions are
distinct.  These transformations maintain the fact that the disjuncts
are mutually exclusive.

The rest of the argument is straightforward.  Since the disjuncts in
$\phi$ are mutually exclusive,
the probability of $\phi$ is the  sum of the probabilities of the
disjuncts.  To compute the probability of a disjunct in a state $w$,
we proceed as outlined above.  
We {\red first remove all interventions in intervention
  formulas that are true in $w$.  This may result in interventions
  that were distinct no longer being distinct; in that case, we
  apply the equivalence above to the appropriate intervention
  formulas, to maintain the assumption that all interventions are
  distinct.}  We then compute the probability of each conjunct {\red 
  in $w$, and then treat
the intervention formulas as independent, as discussed above.    This
gives us the probability of each disjunct in $\phi$ in state $w$.
Since the disjuncts are mutually exclusive, the probability of $\phi$
in $w$ is the sum of the probability of the disjuncts of $\phi$ in $w$.  We
then multiply the probability of 
the state $w$ conditional on the context $\vec{u}$ by the probability
of $\phi$ in (i.e., conditional  
on) $w$, and sum over all states $w$, to get the probability of 
$\phi$  in $\vec{u}$.   Finally, we multiply the probability of
$\phi$ in $\vec{u}$ by the probability of $\vec{u}$ and sum over all
contexts $\vec{u}$ to get the probability of $\phi$ in $M$.}
We leave the details to the reader.
}


\section{Converting a CBN to a {\green (Probabilistic)} Causal Model}\label{sec:convert}

{Our semantics for formulas in CBNs reduced to considering their
  semantics in i-compatible causal models.  It would be useful to show
  explicitly
  that such i-compatible causal models exist and how to construct
  them.  That is the goal of this section.
Balke and Pearl \citeyear{BP94} sketched how this could be done.
We largely follow and formalize their construction.



Starting with a CBN $M$, we want to construct an i-compatible 
probabilistic causal model $(M',\Pr')$,} { where $M'$
  has the same type as $M$.
\commentout{
{\red  We do this
  inductively, assuming that {\blue after $m$ steps in the induction,
    we have constructed a model where we have replaced the cpts for
    $m$ variables by equations, and for each of these variables $Y$ we
    have added an exogenous variable $U_Y$. We replace the cpts for
    variables     in topological order, so we do not replace the cpt 
   of a variable before  having replaced the cpts of all its parents.
  Moreover, if $\U_0$ is the set of
  exogenous variables in the CBN $M_0$ that we started with, then we
  assume for the model $M_m$ that we have after step $m$, where we
  have replaced the cpts for the variables $Y_1, 
  \ldots, Y_m$, the  probability $\Pr_m$ defined on contexts in $M_m$ in
  such that $\U_0, U_{Y_1}, 
  \ldots, U_{Y_m}$ are mutually independent,
  and if $\Pr_0$ is the probability on contexts in
  $M_0$, then $\Pr_m(\U) = \Pr_0(\U)$.}

  {\red For the inductive step, to construct $M_{m+1}$ given $M_m$, let}
}
}
{\red  To do this, for each endogenous variable $Y$ in $M$ with
  parents $X_1, \ldots, X_n$,
%
we add a new exogenous variable $U_Y$;
$\R(\U_Y)$ consists of all functions from $\R(X_1) \times \cdots
\times \R(X_n)$ to $\R(Y)$. Balke and Pearl \citeyear{BP94} call such
an exogenous variable a \emph{response function}.  (Response
functions, in turn, are closely related to the \emph{potential response
variables} introduced by Rubin \citeyear{Rubin74}.)  We take $U_Y$ to be
a parent of $Y$ (in addition to $X_1, \ldots, X_n$).  We replace the
cpt for $Y$ be the following equation for $Y$;
$F_Y(x_1, \ldots, x_n, f) = f(x_1, \ldots, x_n)$, where $f$ is the
value of $U_Y$.
Since $f$ is a function from
$\R(X_1) \times \cdots \times \R(X_n)$ to $\R(Y)$, this indeed gives a
value of $Y$, as desired.
{\blue Let $Y_1, \ldots, Y_m$ be the endogenous variables in $M$.
  We define the probability $\Pr'$ on
$\R(\U) \times \R(U_{Y_1}) \times \cdots \times \R(U_{Y_m})$ by taking
$\Pr'(\vec{u},f_1, \ldots
  f_m) = \Pr(\vec{u}) \times \Pi_{i = 1, \ldots, m} \Pr_{Y_i}(f_i)$, where $\Pr_{Y_i}$
  reproduces the probability of the cpt for $Y_i$.  Specifically, 
for an endogenous variable $Y$ with parents $X_1, \ldots, X_n$,
$\Pr_{Y}(f) =  \Pi_{\vec{x} \in \R(X_1) \times \cdots \times
  \R(X_n)} \Pr(Y= f(x_1, \ldots, x_n) \mid X_1 = x_1, \ldots, X_n =
  x_n)$.}
  This makes interventions for different
  settings of $X_1, \ldots, X_n$ independent,
  which is essentially what we assumed in the previous
section when defining the probability of formulas in $\Lcaus$ in
$M_0$, in addition to making interventions on different variables
independent and independent of the context in $M$. 
In any case, it is easy to see that this gives a well-defined probability
on $\R(\U) \times \R(U_{Y_1} \times \R(Y_m)$, the contexts in $M'$. 
Moreover, $M'$ is clearly a causal model { with the same type
  as $M$} { that is i-compatible with $M$.}

\commentout{
As we now show, $M$ and $(M',\Pr')$ are equivalent.

\begin{theorem}\label{thm:main} $M$ and $(M',\Pr')$ are equivalent;
that is, $\Pr_{M}^C(\phi) = \Pr_{(M',\Pr')}^F(\phi)$ for all
formulas $\phi \in \Lcaus$.
\end{theorem}

\prf It is straightforward to show that there is a bijection between
ccces
in $M$ and contexts $(\vec{u},
f_1, \ldots, f_m)$ in $M'$.  The conditional events involving
exogenous variables determine $\vec{u}$, and the conditional events
involving $Y_i$ determine $f_i$ (and vice versa).  Moreover, our
assumptions of independence of the components of a
ccce
and the independence implicit in the definition of $\Pr'$
ensure that the probability of a
ccce
in $M$ is equal
to the probability of the corresponding context in $(M',\Pr')$.  The
equivalence of $M$ and $(M',\Pr')$ follows immediately from this observation.
\eprf
}
}

\commentout{
  {\red First we view both $M$ and $M'$ as CBNs.
  To see that they
  assign the same probability to simple formulas, first observe that
  the probability of a context $\vec{u} \in \R(\U)$ in $M$ is the same
  as the marginal probability of $\vec{u}$ in $M'$ (and both are the
  same as the probability of $\vec{u}$ in $M_0$).  We next want to show
  that for each state $w$, the probability of $w$ conditional on a
  context $\vec{u}$ is the same in $M$ and $M'$. To do this, it
  suffices to show that the marginal probability that we have defined on
  $\R(\U) \times \R(U_Y)$ in $M'$ simulates the cpt for $Y$ in $M$ (since
  all other cpts are the same in $M$ and $M'$).  This follows from the
  observation that, for all fixed $\vec{u}, x_1, \ldots, x_n, y$, we have that 
  $\Pr(Y=y \mid X_1 = x_1, \ldots, X=x_n) = \sum_f 
  \Pr'(\{(\vec{u},f): f(x_1,\ldots, x_n) = y\})$.  (Note that fixing
  $\vec{u}$ here means that if any of $X_1, \ldots, X_n$ are
  exogenous variables, then their values are determined by $\vec{u}$;
  e.g., if $X_j = U_3$, then $x_j = u_3.)$  Since all states have the
  same probability given $\vec{u}$, it follows that all simple
  formulas have the same probability in $M$ and $M'$.

  A similar argument shows that all states in $M$ and $M'$ have the same
  probability given an intervention $\vec{Z} \gets \vec{z}$.  If none
  of the variables in $Z$ is $Y$, this is almost immediate, since $M$
  and $M'$ work the same way for all variables other than $Y$.  
  In the case of $Y$, this again follows from the fact
  the marginal probability that we have defined on 
  $\R(\U) \times \R(U_Y)$ in $M'$ simulates the cpt for $Y$ in $M$.

The probability of a formula $\phi$ in a CBN is the sum of the probabilities
of the
ccce
that make $\phi$ true.

  The fact that $M$ and $M'$ give the same probability to all formulas now
  follows from the fact that we use the same procedure to compute the
  probability of a formula $\phi$ in both cases (since we are viewing
  $M'$ as a CBN) and the procedure depends only in the probability of
  simple formulas and single intervention formulas.

For the last step, it suffices to show that if $M'$ is in fact a
causal model (so that we have replaced all cpts using the procedure
defined above, by adding additional exogenous variables), then $M'$
gives the same probability to a formula $\phi$ when viewed as a CBN
and as a causal model.  Since we have identified an equation
$Y=F(\vec{x})$ with a cpt with the entry $\Pr(Y=y) \mid \vec{X}
= \vec{x}) = 1$, it is immediate that $M'$ gives the same probability
to all states, whether viewed as a CBN or a causal model. This remains
true even after an intervention.  The fact that we treat intervention
formulas as independent (conditional on a context in $M_0$) in the causal
model follows from the fact that all the exogenous $U_{Y_i}$ are
mutually independent conditional on $\U$, and that the definition of
$\Pr'$ guarantees that for the variable $Y$ that was added last,
different interventions on $Y$ are independent of each other (and of
interventions on other variables).  [[SHOULD I SAY MORE HERE.]]

{\red
  Of course, if we repeat the process above for all the endogenous
variables in a CBN $M$, we end up with a functional causal model that,
by Theorem~\ref{thm:main}, is
equivalent to $M$ (since each step in the procedure maintains
equivalence), and the probability is the same at the final step
whether we view $M'$ as a CBN or a causal model.}
}

\begin{example}
Consider the CBN $M^*$ of the form $U
\rightarrow X \rightarrow Y$, where all variables are binary.
Suppose that $\Pr(U=0) = 1$, $\Pr(X = 0 \mid U=0) =
\Pr(X=0 \mid U=1) = \Pr(Y=0 \mid
X=0) = \Pr(Y=0 \mid X=1) = 1/2$.  In the construction,
there are four values of $U_X$, which we denote
  $f_{00}$, $f_{01}$, $f_{10}$, and $f_{11}$, respectively, where
  $f_{ij}: \R(U) \rightarrow \R(X)$, $f_{ij}(0) = i$, and $f_{ij}(1) =
  j$.  Similarly, 
  there are four values of $U_Y$, which we denote
  $g_{00}$, $g_{01}$, $g_{10}$, and $g_{11}$, where  $g_{ij}: \R(X)
  \rightarrow \R(Y)$, and is defined analogously to $f_{ij}$.
In the resulting functional model, 
each of the 16 contexts with $U=0$ has probability $1/16$.
Of these 16 contexts,
  the 8 contexts of the form $(U=0, U_X = f_{0i}, U_Y=g_{1j}))$ or
  $(U=0, U_X = f_{1i}, U_Y=g_{j1})$ result in a world where $Y=1$; the
  remaining 8 result in a world where $Y=0$.  Thus, if we condition on
  $Y=1$, we
  get that  get that the 8 that result in $Y=1$ each has
  probability $1/8$. 
  We can now compute, for example, that
      after conditioning we have $\Pr([X\gets 1](Y=0) \mid Y=1) =
      1/4$: conditional on $Y=1$, the event $[X\gets 1](Y=0)$ holds
      only in the two contexts of the form $(U=0, U_X = f_{0i}, U_Y = g_{10})$.

If we apply the semantics of Section~\ref{sec:semantics} directly to
$M^*$, it is easy to see that $\Pr(Y=1) = 1/2$.  While
$\Pr([X \gets 1](Y=0))  = 1/2$, it is not the case that $\Pr(Y=1
\land [X \gets 1](Y=0)) = 1/4$.  To compute $\Pr(Y=1
\land [X \gets 1](Y=0))$, we consider the world where $U=0 \land X=0
\land Y=1$ and the world where $U=0 \land X=1 \land Y=1$.  Each of
these worlds has unconditional probability $1/4$.  In the former, if we
intervene and set $X=1$, so
the probability of $Y=0 \land [X \gets 1](Y=0)$ (conditional on this
world) is $1/2$.  But in the world where $U=0 \land X=1 \land Y=1$ holds,
since $X=1$ already holds, there is no need to intervene.  Since $Y=0$
does not hold in this world, the probability of 
$Y=1 \land [X \gets 1](Y=0)$Q conditional on this
world is 0.  Thus, the probabliity of $(Y=0 \land [X \gets
  1](Y=0)$ conditional on $U=0$ is $1/8$, and the probability of 
$([X \gets
  1](Y=0)$ conditional on $Y=0$ (and $U=0$) is $1/4$, matching the
probability given by the causal model (as we would expect).

\end{example}
    }

{ We can easily modify this construction to get a family of
    causal models compatible with $M$, by loosening the requirements
  on $\Pr'$.  While we do want the marginal of $\Pr'$ on $\U$ to agree with
    the marginal of $\Pr$ on $\U$, and we want it to reproduce the
    probability of the cpt for each variable $Y_i$ (as defined above),
    there are no further independence requirements.
If we do that,
we get the bounds computed by
Balke and Pearl \citeyear{BP94}.
The following example illustrates the impact of dropping the
  independence assumptions.
  
  \begin{example} Consider the CBN $M^*$ from Example~\ref{xam:M*} again.
  Using the notation from that example, suppose that $a=1$ and $b=d=1/2$.
Independence 
  guarantees that the set of
ccces
that includes
$U=0$, $X=0 \mid U=0$, and $Y=0 \mid X=0$ has probability $abd = 1/4$.
But now consider a causal  model $(M^{**}, \Pr^{**})$ {
    compatible with $M^*$} where the
  contexts are the same as in our construction, but 
the probability $\Pr^{**}$ does not build in the independence
assumptions of our construction.   Recall that contexts in $M^{**}$
have the form $(u,f_X,f_Y)$.  Since we want $(M^{**}, \Pr^{**})$ to
be compatible with $M^*$, we must have 
$\Pr^{**}(\{(u,f_X,f_U,f_Y): u=0\}) = 1$, $\Pr^{**}(\{(u,f_X,f_Y):
f_X(0) = 0\}) = 1/2$, and  $\Pr^{**}(\{(u,f_X,f_Y):
f_Y(0) = 0\}) = 1/2$, so that $\Pr^{**}$ agrees with the three cpts.
But this still leaves a lot of flexibility.  For example,
we might have $\Pr^{**}(\{(u,f_X,f_Y): f_X(0) = f_Y(0)  = 0\} = 
Pr^{**}(\{(u,f_X,f_Y): f_X(1) = f_Y(1)  = 1\} = 1/2$ (so that
$\Pr^{**}(\{(u,f_X,f_Y): f_X(0) = 0, f_Y(1)  = 1\}) = 
\Pr^{**}(\{(u,f_X,f_Y): f_X(0) = 1, f_Y(1)  = 0\}) = 0$).
As shown in Example~\ref{xam:M*continued}, $\Pr_{M^*}(X=0
\land Y = 0 \land [X \gets 1](Y =1)) = 1/4$.  However, it is easy to
check that $\Pr_{M^{**}}(X=0
  \land Y = 0 \land [X \gets 1](Y =1)) = 1/2$.  (Tian and Pearl
  \citeyear{TP00} give bounds on the range of probabilities
  for this formula, which is called the \emph{probability of
    necessity}; see also Section~\ref{sec:computing} and
  \cite[Section 9.2]{pearl:2k}.) 
\end{example}
  
  {

{
\section{Computing counterfactual probabilities}\label{sec:computing}

In this section, we analyze \emph{counterfactual probabilities},
introduced by Balke and Pearl \citeyear{BP94}.
Counterfactual probabilities have been widely used in several domains,
including psychology~\cite{hoerl2011understanding}, 
epidemiology~\cite{greenland1999epidemiology}, and 
political science~\cite{grynaviski2013contrasts}, to explain the effects
on the outcome.  More recently, they have proved useful in machine
learning to explain the output of ML models~\cite{pmlr-v177-beckers22a}.

\commentout{
Recall that the total number of fccce's
can be doubly exponential in the number of variables in the causal
graph (because for a binary variable $X$ with $t$ parents,
for each of the $2^t$ setting of the parents of $X$, the
conditional event  where either $X=0$ or $X=1$ must be included in
each fccce).  We show that, given our independence assumptions, to
compute many counterfactual probabilities of 
interest, we need to compute the conditional probabilities of far
fewer (typically exponentially many) events, and these conditional
probabilities do not involve counterfactuals, so can be computed from
observational data.  Thus, our results and assumptions have
significant practical implications.

For a $S$ of conditional events, let
 $\mathcal{F}(S)$ denote the the set of fccces that include
all the events in $S$.
Note that $\mathcal{F}(\emptyset)$ consists of all fccces.

\begin{example}
  Consider the a CBN $M$ characterized by the following dag:
  \begin{center}
  \begin{tikzpicture}
  \tikzstyle{var} = [circle, draw, thick]

  \node[var] (U) at (0,0) {U};
  \node[var] (X) at (2,0) {X};
  \node[var] (Y) at (4,0) {Y};

 \draw[->, thick] (U) -- (X);
  \draw[->, thick] (X) -- (Y);
\end{tikzpicture}
  \end{center}

Let    $S=  \{U=0, (X=0\mid U=0)\}$.
Then $|\mathcal{F}(S)| = 4$, since to get an fccce that
contains $S$, we must add to $S$
\begin{itemize}
\item one of $Y=0\mid X=0$ and $Y=1\mid X=0$; and
 \item one of $Y=0\mid X=1$ and $Y=1\mid X=1$.
\end{itemize}
\end{example}

Generalizing Definition~\ref{dfn:corresponding}, given a set $S$ of
conditional events, let $\phi_S$ be the conjunction of the formulas
corresponding to the events in $S$.  (Thus, if $S$ consists of the
single ccce $\alpha$, then $\phi_S = \phi_\alpha$.)

}
\commentout{
\begin{lemma}
  Given a CBN $M$ and a set $S$ of ccces for $M$,
$$\Pr_M(\mathcal{F}(S)) =\sum_{\alpha\in \mathcal{F}(S)} \Pr_M(\phi_\alpha) =
    \Pr_M(\phi_S).$$
\label{lem:sumccce}
\end{lemma}
\begin{proof}
  It is immediate from Theorem~\ref{ccce} that distinct ccces
are mutually exclusive in any causal model compatible with $M$.  The
result follows. \end{proof} 
As a corollary, it follows that  $\Pr_M(\mathcal{F}(\emptyset)) = 1$.
}
%

Two types of counterfactual formulas that have proved particularly
useful are the \emph{probability of necessity}  and the
\emph{probability of sufficiency}; we focus on them in this section.
As discussed by Pearl~\citeyear{pearl:2k}, counterfactual analysis
is particularly useful
when it comes to understanding the impact of a decision on the
outcome. For example, we 
might be interested in the probability that an  outcome $O$
would not have been favorable if $A$ were not true. This 
captures the extent to which $A$ is a \emph{necessary} cause of $O$.
Similarly, we might be interested in whether $A$ is \emph{sufficient}
for $O$: that is if $A$ were true, would $O$ necessarily be true?
We now review the formal definitions of these notions; see
\cite{pearl:2k} for more discussion.
\begin{definition}
    Let $X$ and $Y$ be binary variables in a causal model or CBN $M$. 
\begin{enumerate}
  \item \emph{Probability of necessity of $X$ for $Y$}: 
  $\PN_M^{X,Y} =
    \Pr_M( [X\leftarrow 0] (Y=0) 
  | X=1\land Y=1 )$. 
\item \emph{Probability of sufficiency of $X$ for $Y$}: $\PS_M^{X,Y} = \Pr_M( [X\leftarrow 1]
    (Y=1) \mid  X=0\land Y=0 )$.
  \item \emph{Probability of necessity and sufficiency of $X$ for $Y$}:
  $\PNS_M^{X,Y} = \Pr_M(  
  [X\leftarrow 1] (Y=1) \land [X\leftarrow 0](Y=0) ).$ 
\end{enumerate}
\end{definition}
Pearl \citeyear{pearl:2k} gives examples showing that neither the probability
of necessity nor the probability of sufficiency in a CBN can be
identified; we can just determine a range for these probabilities.
But with our (arguably natural and realistic) independence assumptions,
they can be identified, justifying our notation.
Moreover, these
probabilities can be computed using only conditional probabilities of
(singly) exponentially many simple formulas (not involving
interventions).
Since these formulas do not involve  interventions,
they can be estimated from observational data, 
without requiring involving controlled experiments.  
Thus, our results and assumptions have significant practical implications.


\commentout{
As a warmup to our more general results on computing the probability of
necessity and sufficiency, we consider a simple example where the
probability of sufficiency can be computed using
conditional probabilities that can be  estimated from
observational data.
The example also suggests that we do this using (singly) exponential
conditional probabilities.

\begin{example}\label{xam:PS}
  Consider a CBN $M$ with the following dag, where
  all variables are binary:

  \begin{center}
\begin{tikzpicture}
  \tikzstyle{var} = [circle, draw, thick]

  \node[var] (U) at (0,0) {U};
  \node[var] (Z) at (-1,-1) {Z};
  \node[var] (X) at (1,-1) {X};
    \node[var] (Y) at (0,-2) {Y};

 \draw[->, thick] (U) -- (Z);
  \draw[->, thick] (U) -- (X);
    \draw[->, thick] (Z) -- (Y);
      \draw[->, thick] (X) -- (Y);
\end{tikzpicture}
\end{center}

The formula $\phi = X=0 \land Y=0 \land [X \gets
  1](Y=1)$ is true in only four fccces: (a) $U=0 \land
(X=0 \mid U=0) \land (Z=z \mid U=0) \land (Y=0 \mid (U=0, X=0, Z=z)) \land 
(Y=1 \mid (U=0, X=1, Z=z))$; and
(b) $U=1 \land )X=0
\mid U=1)  \land (Z=z \mid U=1)\land (Y=0 \mid (U=1, X=0, Z=z)) \land 
(Y=1 \mid (U=1, X=1, Z=z))$, where $z\in \{0,1\}$.
A straightforward computation (see the appendix for details) shows
that for all causal models compatible with $M$, 
that for all causal models $M'$ compatible with $M'$
$$\begin{array}{ll}
  \Pr_{M}(\phi)  
=  \sum_{z\in\{0,1\}} & \hspace{-.12in} \Pr_{M'}(X=0 \land Y=0 \land 
  Z=z)\\ & \land [X \gets 1, Z \gets x](Y=1)).
  \end{array}
$$
Moreover, if $M'$ is i-compatible with $M$, then 
$$\begin{array}{lll}
  \Pr_{M}(\phi) &\hspace{-.1in}  = \sum_{z\in\{0,1\}} {\Pr}_{M'}(X=0 \land Y=0 \land 
  Z=z)\\ &\hspace{.75in} \Pr_{M'}([X \gets 1, Z \gets x](Y=1))\\
 &\hspace{-.1in}  = \sum_{z\in\{0,1\}} {\Pr}_{M'}(X=0 \land Y=0 \land 
  Z=z)\\ &\hspace{.75in} \Pr_{M}(Y=1 \mid (X=1 \land Z=x)).
\end{array}
$$
\end{example}

}


Let $Pa^X(Y)$ consist of all the parents of $Y$ other than $X$.
For a set ${\mathcal Z}$ of variables,
let $\mathcal{T}_{{\mathcal Z}}$ consist of all possible settings of the
variables in ${\mathcal Z}$.

\begin{theorem}\label{thm:pns}
 If   $M$ is a CBN where $Y$ is a child of $X$, then
    \begin{itemize}
    \item[(a)] $    \PN_M^{X,Y}  = \sum_{c^j_{Pa^X(Y)} \in
        \mathcal{T}_{Pa^X(Y)}} \mbox{\hspace{.2in}} {\Pr}_M(
      Pa^X(Y)= c^j_{Pa^X(Y)}\mid Y=1 \land X=1 )
      \\  \mbox{\hspace{1.5in}} \Pr_M
  (Y=0 \mid X=0 \land Pa^X(Y)= c^j_{Pa^X(Y)});$
    \item[(b)] $
   \PS_M^{X,Y}   = \sum_{c^j_{Pa^X(Y)} \in  
   \mathcal{T}_{Pa^X(Y)}} \mbox{\hspace{.2in}} {\Pr}_M( Pa^X(Y)=
   c^j_{Pa^X(Y)}\mid 
   Y=0 \land X=0 )\\  \mbox{\hspace{1.5in}} {\Pr}_M 
   (Y=1 \mid X=1 \land Pa^X(Y)= c^j_{Pa^X(Y)});$
 \item[(c)] $\PNS_M^{X,Y}  = \PS_M^{X,Y} \cdot \Pr_M(X=0 \land Y=0) + \PN_M^{X,Y} \cdot \Pr_M(X=1 \land Y=1).$
\end{itemize}
 \end{theorem}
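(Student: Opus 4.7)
The plan is to prove each part by decomposing the target counterfactual probability over the possible settings of $Pa^X(Y)$ and then invoking the independence of cpt entries for $Y$ at different values of $X$, which is the characteristic feature of i-compatible causal models (Section~\ref{sec:convert}). Throughout, I would evaluate probabilities by passing to an i-compatible probabilistic causal model $(M',\Pr')$, which is legitimate by Theorem~\ref{thm:same}.

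For part (a), I would begin from the definition
\[
\PN_M^{X,Y} \cdot \Pr_M(X=1 \land Y=1) \;=\; \Pr_M\bigl([X\gets 0](Y=0) \land X=1 \land Y=1\bigr),
\]
then apply the law of total probability over $c \in \mathcal{T}_{Pa^X(Y)}$ on the right-hand side. The crucial step is to factor each summand
\[
\Pr_M\bigl([X\gets 0](Y=0) \land X=1 \land Y=1 \land Pa^X(Y)=c\bigr)
\]
as $\Pr_M(Y=0 \mid X=0, Pa^X(Y)=c) \cdot \Pr_M(X=1 \land Y=1 \land Pa^X(Y)=c)$. Dividing by $\Pr_M(X=1 \land Y=1)$ and regrouping produces the claimed sum. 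Part (b) follows by an identical argument with the roles of $0$ and $1$ interchanged, so I would simply indicate this.

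For part (c), I would split the event $[X\gets 1](Y=1) \land [X\gets 0](Y=0)$ according to the observed values of $X$ and $Y$. The key observation, which follows from Theorem~\ref{ccce} applied to the relevant ccces, is that the observation $X=1 \land Y=1$ and the intervention $[X\gets 1](Y=1)$ are both determined by the same cpt entry for $Y$—the one indexed by $X=1$ and the realized value of $Pa^X(Y)$, which is unchanged by an intervention on $X$. Hence $X=1 \land Y=1$ entails $[X\gets 1](Y=1)$, while $X=1 \land Y=0$ is inconsistent with $[X\gets 1](Y=1)$; symmetric statements hold for $X=0$. The four-way case split therefore collapses to
\[
\PNS_M^{X,Y} \;=\; \Pr_M\bigl([X\gets 0](Y=0) \land X=1 \land Y=1\bigr) + \Pr_M\bigl([X\gets 1](Y=1) \land X=0 \land Y=0\bigr),
\]
and rewriting each term as a marginal times a conditional, using the definitions of $\PN$ and $\PS$, yields the identity.

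The main obstacle is justifying the factorization in part (a) rigorously. It is here that independence across cpt entries for a single variable—beyond independence across distinct variables—is essential; without it we would only recover the Balke--Pearl~\citeyear{BP94} bounds rather than a point identification. The cleanest route is to translate into the response-function construction: conditioning on $X=1 \land Y=1 \land Pa^X(Y)=c$ constrains only the coordinate of $U_Y$ indexed by $(X=1, Pa^X(Y)=c)$, leaving the coordinate indexed by $(X=0, Pa^X(Y)=c)$—which alone governs $[X\gets 0](Y=0)$ given $Pa^X(Y)=c$—distributed exactly according to its marginal, namely the cpt entry $\Pr_M(Y=0 \mid X=0, Pa^X(Y)=c)$.
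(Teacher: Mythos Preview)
Your proposal is correct and reaches the same identities, but your route is more direct than the paper's. The paper proves (b) first (then symmetrizes for (a)) by decomposing the numerator over settings of \emph{all} variables in $\mathcal{U}\cup\mathcal{V}\setminus\{X,Y\}$, invoking the ccce machinery of Theorem~\ref{ccce} to identify which $\phi_\alpha$ imply each summand, and only afterward collapsing the sum to range over $\mathcal{T}_{Pa^X(Y)}$; for (c) it simply cites \cite[Lemma~9.2.6]{pearl:2k}. You instead decompose directly over $\mathcal{T}_{Pa^X(Y)}$ and justify the factorization inside the response-function model $(M',\Pr')$ of Section~\ref{sec:convert}, and you prove (c) inline---your four-way case split is exactly the content of Pearl's lemma. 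Both arguments rest on the same within-variable independence of the coordinates of $U_Y$ that i-compatibility provides; the paper's detour through ccces makes the link to Theorem~\ref{ccce} explicit, whereas yours is shorter but asks the reader to accept that, conditional on $Pa^X(Y)=c$, the counterfactual $[X\gets 0](Y=0)$ is controlled solely by the $(0,c)$ coordinate of $U_Y$. That last claim tacitly assumes that intervening on $X$ leaves $Pa^X(Y)$ unchanged (i.e., that $X$ is not an ancestor of any other parent of $Y$); the paper's proof makes the same assumption without comment, so this is not a divergence from the paper but a point you may want to flag explicitly.
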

We prove the calculation for the probability of sufficiency, $  \PS_M^{X,Y} $.
Essentially the same argument can be used to compute the probability
of necessity, $\PN_M^{X,Y}$.  Finally,
for part (c), we use the representation of 
$\PNS_M^{X,Y}$ in terms of $\PS_M^{X,Y}$ and $\PN_M^{X,Y}$
given in~\cite[Lemma 9.2.6]{pearl:2k}.

\begin{proof}[Proof of Theorem~\ref{thm:pns} (b)]
Let $\mathcal{Z} = \mathcal{U}\cup
  \mathcal{V} \setminus \{X,Y\}$. 
$\mathcal{T}_{\mathcal Z}$ has $2^{n-2}$ settings, where $n= |\mathcal{U}\cup
  \mathcal{V}|$.
  For a setting $c \in \mathcal{T}_{\mathcal Z}$, let $c_Z$ be the
  setting of the variable $Z$ in $c$.

  By definition,
\begin{equation}\label{eq:ps}
  \PS_M^{X,Y} = \frac{\Pr_M( (X=0) \land (Y=0) \land
    [X\leftarrow 1](Y=1))}{\Pr_M( (X=0) \land (Y=0)) }.
  \end{equation}

Let the numerator $(X=0) \land (Y=0) \land
[X\leftarrow 1](Y=1)$ be $\psi$. Then we have
$$\begin{array}{ll}
  \Pr_M(\psi) = \\ \sum_{c \in \mathcal{T}_{\mathcal{Z}}} \Pr_M( (X=0) \land (Y=0) \land
  [X\leftarrow 1](Y=1) \land  \bigwedge_{Z \in \mathcal{Z}}(Z = c_Z
  )).
   \end{array}$$

We next compute the probability of $$\psi_{c} = (X=0) \land (Y=0) \land
[X\leftarrow 1](Y=1) \land \bigwedge_{Z \in \mathcal{Z}}(Z = c_Z).$$ 
  From Theorem~\ref{ccce}, it follows that 
in all causal models $M'$ compatible with $M$,
    $${\Pr}_{M'}(\psi_{c}) =
 \sum_{\phi_\alpha \implies \psi_{c} } {\Pr}_{M'}(\phi_\alpha) .$$
Now
$\phi_\alpha \implies \psi_{c}$ holds whenever $\alpha$ contains the
following events:
\begin{enumerate}
\item $(X=0  \mid Pa(X) = c_{Pa(X)})  $
\item $(Y=0   \mid X=0, Pa^X(Y) = c_{Pa^X(Y)})  $
\item $(Y=1   \mid X=1, Pa^X(Y) = c_{Pa^X(Y)})  $
  \item $(Z = c_Z \mid Pa(Z) = c_{Pa(Z)})$, for all $Z \in \mathcal{Z} $.
\end{enumerate}
%
Let $S_c$ consist of all ccces that contain these four events, and
let $\phi_{S_c}$ be the conjunction of the formulas
corresponding to the events in $S$.  Then by Theorem~\ref{ccce}, 
$$\phi_{S_c} \dimp \bigvee_{\alpha \in S_c} \phi_\alpha.$$
Since the formulas $\phi_\alpha$ for distinct ccces in $S_c$ are
mutually exclusive, we have that 
%
%
$\Pr_{M'}(\psi_{c}) = \Pr_{M'}(\phi_S)$.
%
 Therefore,
 $$\begin{array}{ll}
      &\Pr_{M'}(\psi) \\
      &=  \sum\limits_{c \in \mathcal{T}_{\mathcal{Z}}} \Pr_{M'}(\phi_c)\\
      & = \sum\limits_{c\in \mathcal{T}_{\mathcal{Z}}} \Pr_{M'}(X=0 \land Y=0
 \land  \bigwedge_{Z \in \mathcal{Z}} (Z = c_Z )  \land
        [X\leftarrow1, Pa^X(Y)\leftarrow c_{Pa^X(Y)}](Y=1) )\\ 
& =\sum\limits_{c_{Pa^X(Y)}\in \mathcal{T}_{Pa^X(Y)}} \\
& \hspace{0.4in}\Pr_{M'}(X=0 \land Y=0
 \land  Pa^X(Y)\leftarrow c_{Pa^X(Y)}  \land  [X\leftarrow1,
      Pa^X(Y)\leftarrow c_{Pa^X(Y)}](Y=1) ).
 \end{array}$$
 If $M'$ is i-compatible with $M$, then we can further conclude that
$$\begin{array}{ll}
 &\sum_{c\in \mathcal{T}_{\mathcal{Z}}} \Pr_{M'}(X=0 \land Y=0 \land  \bigwedge_{Z \in \mathcal{Z}} (Z = c_Z )  \land  [X\leftarrow1, Pa^X(Y)\leftarrow c_{Pa^X(Y)}](Y=1) )\\
   &=\sum\limits_{c_{Pa^X(Y)}\in \mathcal{T}_{Pa^X(Y)}} \Pr_{M'}(X=0 \land Pa^X(Y)=
   c_{Pa^X(Y)})\\
   &  \hspace{1.1in}\Pr_{M'}( Y=0 \mid X=0 \land Pa^X(Y)= c_{Pa^X(Y)}) \\
 &  \hspace{1.1in} \Pr_{M'} (Y=1
   \mid X=1 \land Pa^X(Y)= c_{Pa^X(Y)})\\
&= \sum\limits_{c_{Pa^X(Y)}\in \mathcal{T}_{Pa^X(Y)}} \Pr_{M'}( Y=0 \land X=0 \land Pa^X(Y)= c_{Pa^X(Y)})\\
&\hspace{1.1in} \Pr_{M'} [Y=1
    \mid X=1 \land Pa^X(Y)= c_{Pa^X(Y)}).
 \end{array}$$

Since $\Pr_{M'}(\psi) = \Pr_M(\psi)$, substituting the expression for
$\Pr_M(\psi)$ into (\ref{eq:ps}), we get
   $$\begin{array}{ll}
\PS_M = \\ \frac{\sum_{c_{Pa^X(Y)}\in \mathcal{T}_{Pa^X(Y)}} \Pr( Y=0
     \land X=0 \land Pa^X(Y)= c_{Pa^X(Y)})   \Pr (Y=1 
   \mid X=1 \land Pa^X(Y)= c_{Pa^X(Y)})}{\Pr(X=0\land Y=0)},
\end{array}
   $$
      as desired.
   \end{proof}

\fullv{
\begin{proof}
Let $\mathcal{Z} = \mathcal{U}\cup
  \mathcal{V} \setminus \{X,Y\}$. 
$\mathcal{T}_{\mathcal Z}$ has $2^{n-2}$ settings, where $n= |\mathcal{U}\cup
  \mathcal{V}|$.
  For a setting $c \in \mathcal{T}_{\mathcal Z}$, let $c_Z$ be the
  setting of the variable $Z$ in $c$.

  By definition,
\begin{equation}\label{eq:ps}
  \PS_M^{X,Y} = \frac{\Pr_M( (X=0) \land (Y=0) \land
    [X\leftarrow 1](Y=1))}{\Pr_M( (X=0) \land (Y=0)) }.
  \end{equation}

Let the numerator $(X=0) \land (Y=0) \land
[X\leftarrow 1](Y=1)$ be $\psi$. Then we have
$$\begin{array}{ll}
  \Pr_M(\psi) = \\ \sum_{c \in \mathcal{T}_{\mathcal{Z}}} \Pr_M( (X=0) \land (Y=0) \land
  [X\leftarrow 1](Y=1) \land  \bigwedge_{Z \in \mathcal{Z}}(Z = c_Z
  \mid Pa(Z) = c_{Pa(Z)})).
   \end{array}$$

We next compute the probability of $$\psi_{c} = (X=0) \land (Y=0) \land
[X\leftarrow 1](Y=1) \land \bigwedge_{Z \in \mathcal{Z}}(Z = c_Z \mid
  Pa(Z) = c_{Pa(Z)}).$$ 
  From Theorem~\ref{ccce}, it follows that 
in all causal models $M'$ compatible with $M$,
    $${\Pr}_{M'}(\psi_{c}) =
 \sum_{\phi_\alpha \implies \psi_{c} } {\Pr}_{M'}(\phi_\alpha) .$$
Now
$\phi_\alpha \implies \psi_{c}$ holds whenever $\alpha$ contains the
following events:
\begin{enumerate}
\item $(X=0  \mid Pa(X) = c_{Pa(X)})  $
\item $(Y=0   \mid X=0, Pa(Y) = c_{Pa(Y)})  $
\item $(Y=1   \mid X=1, Pa(Y) = c_{Pa(Y)})  $
  \item $(Z = c_Z \mid Pa(Z) = c_{Pa(Z)})$, for all $Z \in \mathcal{Z} $.
\end{enumerate}
%
n
Let $S_c$ consist of all ccces that contain these four events, and
let $\phi_{S_c}$ be the conjunction of the formulas
corresponding to the events in $S$.  Then by Theorem~\ref{ccce}, 
$$\phi_{S_c} \dimp \bigvee_{\alpha \in S_c} \phi_\alphfan.$$
Since the formulas $\phi_\alpha$ for distinct ccces in $S_c$ are
mutually exclusive, we have that 
%
%
$\Pr_{M'}(\psi_{c}) = \Pr_{M'}(\phi_S)$.
%
 Therefore,
$$\begin{array}{ll}
      \Pr_{M'}(\psi) &=  \sum_{c \in \mathcal{T}_{\mathcal{Z}}} \Pr_{M'}(\phi_c)\\
      & = \sum_{c\in \mathcal{T}_{\mathcal{Z}}} \Pr_{M'}(X=0 \land Y=0
 \land  \bigwedge_{Z \in \mathcal{Z}} (Z = c_Z )  \land
        [X\leftarrow1, Pa(Y)\leftarrow c_{Pa(Y)}](Y=1) )\\ 
& =\sum_{c_{Pa(Y)}\in \mathcal{T}_{Pa(Y)}} \Pr_{M'}(X=0 \land Y=0
 \land  Pa(Y)\leftarrow c_{Pa(Y)}  \land  [X\leftarrow1,
      Pa(Y)\leftarrow c_{Pa(Y)}](Y=1) ).
 \end{array}$$
 If $M'$ is i-compatible with $M$, then we can further conclude that
$$\begin{array}{ll}
 &\sum_{c\in \mathcal{T}_{\mathcal{Z}}} \Pr_{M'}(X=0 \land Y=0 \land  \bigwedge_{Z \in \mathcal{Z}} (Z = c_Z )  \land  [X\leftarrow1, Pa(Y)\leftarrow c_{Pa(Y)}](Y=1) )\\
   &=\sum_{c_{Pa(Y)}\in \mathcal{T}_{Pa(Y)}} \Pr_{M'}(X=0 \land Pa(Y)=
   c_{Pa(Y)}) \Pr_{M'}( Y=0 \mid X=0 \land Pa(Y)= c_{Pa(Y)})  \Pr_{M'} (Y=1
   \mid X=1 \land Pa(Y)= c_{Pa(Y)})\\
&= \sum_{c_{Pa(Y)}\in \mathcal{T}_{Pa(Y)}} \Pr_{M'}( Y=0 \land X=0 \land Pa(Y)= c_{Pa(Y)}]  \Pr_{M'} [Y=1
    \mid X=1 \land Pa(Y)= c_{Pa(Y)}).
 \end{array}$$

Since $\Pr_{M'}(\psi) = \Pr_M(\psi)$, substituting the expression for
$\Pr_M(\psi)$ into (\ref{eq:ps}, we get
   $$\begin{array}{ll}
   \PS_M = \\  \frac{\sum_{c_{Pa(Y)}\in \mathcal{T}_{Pa(Y)}} \Pr( Y=0 \land X=0 \land Pa(Y)= c_{Pa(Y)})   \Pr (Y=1
   \mid X=1 \land Pa(Y)= c_{Pa(Y)})}{\Pr(X=0\land Y=0)},
   \end{array}
   $$}
\commentout{
 To evaluate $\PNS_M$ we use the following lemma.
\begin{lemma}
$\PNS_M = \PS_M \cdot \Pr(X=0 \land Y=0) + \PN_M \cdot \Pr(X=1 \land Y=1)$
\end{lemma}
\begin{proof}
This proof is adapted from~\cite{pearl:2k}.
{\scriptsize
\begin{eqnarray*}
  &&\PNS_M \\
&=& {\Pr}_M[ [X\leftarrow 1] (Y=1) \land [X\leftarrow 0](Y=0) ]\\
&=&{\Pr}_M[ [X\leftarrow 1] (Y=1) \land [X\leftarrow 0](Y=0)\land (X=0)\land(Y=0) ] \\
&& +{\Pr}_M[ [X\leftarrow 1] (Y=1) \land [X\leftarrow 0](Y=0)\land (X=1)\land(Y=1) ] \\
&=& \PS_M \cdot {\Pr}_M[X=0\land Y=0] + \PN_M\cdot {\Pr}_M[X=1\land Y=1]
\end{eqnarray*}
}
\end{proof}
}
\commentout{
According to~\cite{pearl:2k}, accurate estimation of counterfactual scores requires access to
 experimental data. However, our result shows that the score can be estimated under
independence assumptions. This result can have huge implications in settings where
access to experimental data is not possible.
} 
We can extend Theorem~\ref{thm:pns} to the case
where $Y$ is any descendant of $X$ (not necessarily a child of $X$). 
In this case, the term involving $Pa(Y)$ would change to the set of
the ancestors of $Y$ 
at the same level as $X$ in the topological ordering of the 
variables. 
We can further extend Theorem~\ref{thm:pns} to arbitrary formulas $\psi$, where
$\Pr(\psi)$ can 
be determined by calculating the probability of formulas that do not involve
interventions (although they may involve conditional probabilities), and 
thus can be determined using only observational information.
The key idea of the proof is to convert $\psi$ to a disjunction of
conjunctions, where the disjuncts are mutually exclusive and have the
form 
$\psi_i = \psi_{i0} \land \left(\bigwedge_{j\in \{1,\ldots,r\}} \psi_{ij}\right)$,
  where $\psi_{i0} = \left(\bigwedge_{j\in\{1,\ldots,s\}} (Z_{ij}= z_{ij})\right)$ is a simple
 formula (with no intervention), and
$\psi_{ij}$ for $j>0$ has the form
 $[\vec{X}_j\leftarrow \vec{x}_j] (\bigwedge_{k\in \{1,\ldots,t\}}
  Y_{ijk}=y_{ijk})$, where $Y_{ijk}$ is a descendant of 
   $\vec{X}_j$ in $M$, so that we can apply the ideas in
the proof of Theorem~\ref{thm:pns} to each disjunct separately. 
 In terms of complexity, we show that $\Pr(\psi)$ can be estimated in
$O(m\cdot 2^{nr^*})$ conditional probability calculations, where $r^*$ is the maximum
number of conjuncts in a disjunction $\psi_i$
 that involve at least one intervention, and $m$ is the number of disjuncts in the DNF.
 Unfortunately, for an arbitrary formula $\psi$, determining $\Pr(\psi)$ may
involve  doubly-exponentially many conditional probabilities.  

 \begin{theorem}
 Given a CBN $M = (\S,\P)$ and an arbitrary formila $\psi$,
 then $\Pr(\psi)$ can be
determined by taking the probability of formulas that do not involve
interventions (although they may involve conditional probabilities),
and thus can be determined using only observational information.
 \commentout{
 If $\psi$ is an arbitrary formula such that all subformulas $\psi'$ of $\psi$
that involve interventions have the form $[X \leftarrow x]\psi''$,
then $\Pr(\psi)$ can be
determined by taking the probability of formulas that do not involve
interventions (although they may involve conditional probabilities),
and thus can be determined using only observational information.
}
\label{thm:arbitrary}
 \end{theorem}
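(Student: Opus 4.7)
The plan is to follow the sketch outlined in the paragraph preceding the statement, first reducing $\psi$ to a normal form on which the argument of Theorem~\ref{thm:pns} can be applied disjunct-by-disjunct. Using only standard propositional reasoning together with the two causal equivalences already noted in the paper---namely, $[\vec{Y}\gets\vec{y}]\phi \land [\vec{Y}\gets\vec{y}]\phi' \dimp [\vec{Y}\gets\vec{y}](\phi\land\phi')$ and $\neg[\vec{Y}\gets\vec{y}]\phi \dimp [\vec{Y}\gets\vec{y}]\neg\phi$ (both valid with respect to $M$ by Theorem~\ref{ccce})---one can push all negations inside intervention brackets, merge conjuncts that share the same intervention signature, and then expand into a DNF whose disjuncts are pairwise mutually exclusive. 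The result is $\psi \dimp \bigvee_i \psi_i$ where each $\psi_i = \psi_{i0} \land \bigwedge_{j=1}^{r_i} \psi_{ij}$, with $\psi_{i0}$ a simple conjunction $\bigwedge_j (Z_{ij}=z_{ij})$ and each $\psi_{ij}$ of the shape $[\vec{X}_j \gets \vec{x}_j]\bigwedge_k (Y_{ijk}=y_{ijk})$ with distinct interventions across $j$.

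Next, fix a disjunct $\psi_i$ and imitate the proof of Theorem~\ref{thm:pns}. Working in an i-compatible causal model $M'$ (which is legitimate by Theorem~\ref{thm:icompatible}), split the sample space by enumerating settings $c$ of all variables in $\U \cup \V$ that are not already pinned down by $\psi_{i0}$ or by the interventions in the $\psi_{ij}$'s. For each setting, Theorem~\ref{ccce} identifies the ccces $\alpha$ that force $\psi_i \land (\text{setting }c)$ to hold: they must contain, for each $Y_{ijk}$ appearing under the intervention $[\vec{X}_j\gets\vec{x}_j]$, the conditional event giving $Y_{ijk}=y_{ijk}$ when the parents not intervened on take values dictated by $c$, together with the conditional events for the simple observations in $\psi_{i0}$ and for the variables in $c$ itself. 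Because ccces are mutually exclusive and i-compatibility makes their component conditional events mutually independent, the probability of each such group factors into a product of cpt entries, which collapse into conditional probabilities of the form $\Pr_M(Y_{ijk}=y_{ijk} \mid \vec{X}_j = \vec{x}_j, \text{rest of } Pa(Y_{ijk}))$ together with marginal probabilities of observed conjunctions. None of these involve interventions, so each factor is identifiable from observational data.

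Finally, sum the resulting expressions over the mutually exclusive disjuncts $\psi_i$ to obtain $\Pr_M(\psi)$. The counting of terms follows from the fact that only the variables actually constrained by $\psi_i$ (or the parents of variables inside its interventions) enter the enumeration non-trivially; descendants of intervened-on variables that appear nowhere in $\psi_i$ marginalize out, just as in Theorem~\ref{thm:pns}. This yields the complexity bound $O(m\cdot 2^{nr^*})$ claimed in the discussion, with $m$ the number of disjuncts and $r^*$ the maximum number of intervention conjuncts in a single disjunct.

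The main obstacle I expect is the bookkeeping in the second step: when a single disjunct contains several intervention conjuncts $\psi_{ij}$ with overlapping target or parent sets, care is required to ensure that (i) the corresponding conditional events across different $j$ are genuinely distinct (so independence applies), and (ii) the rewriting that moves a simple observation $Z=z$ into the parent slot of an intervention conditional event preserves the identity of the ccce being summed. Handling this rigorously essentially amounts to verifying, for arbitrary intervention-target/ancestor configurations, the algebraic identity that in the single-intervention case reduces to the computation already carried out for $\PS_M^{X,Y}$; the verification is routine but notationally heavy, and is where the proof will spend most of its length.
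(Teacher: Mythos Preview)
Your overall architecture matches the paper's proof: DNF normalization via the two causal equivalences, mutual exclusivity of disjuncts, then a disjunct-by-disjunct argument in an i-compatible model that enumerates settings of the remaining variables and reads off the required conditional events from the ccces. The paper also inserts two preprocessing simplifications you do not mention---dropping redundant interventions (where $X=x$ already holds in $\psi_{i0}$) and pulling non-descendants of $\vec{X}_j$ out of the scope of $[\vec{X}_j\gets\vec{x}_j]$---so that every $Y_{ijk}$ remaining inside an intervention bracket is genuinely a descendant of some variable in $\vec{X}_j$. These are easy but necessary to state.

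There is, however, a real gap in your second step. You write that the relevant ccces ``must contain, for each $Y_{ijk}$ \ldots, the conditional event giving $Y_{ijk}=y_{ijk}$ when the parents not intervened on take values dictated by $c$.'' That is correct only when $Y_{ijk}$ is a \emph{child} of the intervened variables, which is exactly the hypothesis of Theorem~\ref{thm:pns}. In the general case $Y_{ijk}$ is merely a descendant of $\vec{X}_j$, and some parents of $Y_{ijk}$ may themselves be descendants of $\vec{X}_j$; their post-intervention values are \emph{not} given by the observed setting $c$. The paper handles this by introducing, for each $j$, a separate enumeration over post-intervention settings $c^j\in\mathcal{T}_j'$ of all descendants of $\vec{X}_j$ (fixing $\vec{X}_j=\vec{x}_j$ and $Y_{ijk}=y_{ijk}$), and then requiring the ccce to contain the chain of conditional events $(X=c^j_X \mid Pa(X)=c^j_{Pa(X)})$ for every such intermediate $X$. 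Concretely, with $X_1\to X_2\to Y$ and $\psi_i=[X_1\gets 1](Y=1)$, your description would pin $(Y=1\mid X_2=c_{X_2})$ using the \emph{observed} $c_{X_2}$, whereas the correct constraint is a disjunction over the two post-intervention possibilities $(Y=1\mid X_2=0)\land(X_2=0\mid X_1=1)$ and $(Y=1\mid X_2=1)\land(X_2=1\mid X_1=1)$. This extra layer of enumeration is the main technical content beyond Theorem~\ref{thm:pns}, and it is also what drives the $2^{n r^*}$ factor in the complexity bound (you get an independent $\mathcal{T}_j'$ for each of the $r_i$ intervention conjuncts). The ``obstacle'' you flag---overlapping parent/target sets across different $j$---is indeed where some care is needed, but it is downstream of this missing enumeration, not a substitute for it.
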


\commentout{
We can further extend Theorem~\ref{thm:pns} to formulas that are the
conjunction of simple formulas and a collection of interventions
(rather than just consisting of a single intervention) with some minor
caveats.  To explain the caveats,  observe that
  if a    formula involves an intervention
  $[X\leftarrow x]$ on some variable $X$  such that $X$ is also set to
  $x$ in the formula, 
such as $(X=0 \land [X\leftarrow 0](Y=1))$,
the intervention $X\leftarrow x$
is redundant and can be dropped; for example,
$X=0 \land [X\leftarrow 0, Z \leftarrow 1](Y=1)$ is equivalent to
$X=0 \land [Z leftarrow 1](Y=1)$. 
The next theorem shows that the probability of  a formula that is a
conjunction of a 
simple formula (with no interventions) and a collection of
interventions and has no redundant 
interventions can be computed using conditional probabilities of
simple formulas.  Furthermore, this result can be extended to 
calculate the probability of disjunctions of such
 formulas. Thus, again, we need just observational data for a
 large class of formulas. We defer the proof of the theorem
 to the appendix.}

\commentout{
 \begin{theorem}
   If $M$ is a CBN and 
 $\psi = \psi' \land \left(\bigwedge_{i\in \{1,\ldots,k\}} \psi_i\right)$,
   where $\psi$ contains no redundant interventions,
      $\psi' = \left(\bigwedge_{j\in\{1,\ldots,t\}} (Z_j = z_j)\right)$ is a simple formula (with no interventions), and
$\psi_i$ has the form
   $[X_i\leftarrow 1] (Y_i=1)$, where $Y_i$ is a child of
   $X_i$ in $M$, 
   then
 $$\begin{array}{ll}
\Pr_M(\psi) = \\ \sum\limits_{\substack{c_{Pa^{X_i}(Y_i)}\in
    \mathcal{T}_{Pa^{X_i}(Y_i)}\\ \forall i\in \{1,\ldots,k\}}}
   {\Pr}_M(\psi'  \land \left(\bigwedge_{i} Pa^{X_i}(Y_i)=
   c_{Pa^{X_i}(Y_i)}\right)) \\
   \times  \prod\limits_{i\in \{1,\ldots,k\}} {\Pr}_M( Y_i=1 \mid X_i=1 \land
          Pa^{X_i}(Y_i)= c_{Pa^{X_i}(Y_i)}).
   \end{array}$$.
\label{thm:arbitrary}
  \end{theorem}
  }
\commentout{
  Since any Boolean combination
 of events can be written as a disjunction of 
 conjunctions $\psi' = \bigvee_{i\in \{1,\ldots, t\}} \psi_i'$, where
 every pair $\psi_i', \psi_j'$ for $i\ne j$ are mutually exclusive,
Theorem~\ref{thm:arbitrary} can be extended
 to any arbitrary formula.
  }
  
  }




\commentout{
\section{Proof of Theorem~\ref{thm:arbitrary}}
\begin{proof}
Let $\mathcal{Z} = \mathcal{U}\cup
  \mathcal{V} \setminus \{Z_{j} : j\in \{1,\ldots, t\}\}$. 
$\mathcal{T}_{\mathcal Z}$ has $2^{|\mathcal{Z}|}$ settings.
  For a setting $c \in \mathcal{T}_{\mathcal Z}$, let $c_Z$ be the
  setting of the variable $Z$ in $c$.
  %
  Then
$$\begin{array}{ll}
  \Pr_M(\psi) = \\ \sum_{c \in \mathcal{T}_{\mathcal{Z}}} \Pr_M\biggl(\bigwedge_{j\in \{1,\ldots,t\}} (Z_j = z_j) \land \left(\bigwedge_{i\in \{1,\ldots,k\}} [X_i\leftarrow 1] (Y_i=1)\right)   \land  \bigwedge_{Z \in \mathcal{Z}}(Z = c_Z
  \mid Pa(Z) = c_{Pa(Z)})\biggr).
   \end{array}$$ 
   
   We next compute the probability of 
      $$\psi_c = {\Pr}_M\biggl(\bigwedge_{j\in \{1,\ldots,t_{ij}\}} (Z_j =
   z_j) \land \bigl(\bigwedge_{i\in \{1,\ldots,k\}} [X_i\leftarrow 1]
   (Y_i=1)\bigr)   \land  \bigwedge_{Z \in \mathcal{Z}}(Z = c_Z 
  \mid Pa(Z) = c_{Pa(Z)})\biggr).$$

     From Theorem~\ref{ccce}, it follows that 
in all causal models $M'$ compatible with $M$,
    $${\Pr}_{M'}(\psi_{c}) =
 \sum_{\phi_\alpha \implies \psi_{c} } {\Pr}_{M'}(\phi_\alpha) .$$
Now
$\phi_\alpha \implies \psi_{c}$ holds whenever $\alpha$ contains the
following events:
\begin{enumerate}
\item $(Z_j=z_j  \mid Pa(Z_j) = c_{Pa(Z_j)})  $, for all $j\in \{1,\ldots,t\}$
\item $(Y_i = 1 \mid X_i=1 \land
          Pa^{X_i}(Y_i)= c_{Pa^{X_i}(Y_i)})$, for all $i\in \{1,\ldots,k\}$
  \item $(Z = c_Z \mid Pa(Z) = c_{Pa(Z)})$, for all $Z \in \mathcal{Z}$.
\end{enumerate}

 Let $S_c$ consist of all ccces that contain these events, and
let $\phi_{S_c}$ be the conjunction of the formulas
corresponding to the events in $S$.  Then by Theorem~\ref{ccce}, 
$$\phi_{S_c} \dimp \bigwedge_{\alpha \in S_c} \phi_\alpha.$$
Since the formulas $\phi_\alpha$ for distinct ccces in $S_c$ are
mutually exclusive, we have that 
$\Pr_{M'}(\psi_{c}) = \Pr_{M'}(\phi_S)$.
 Therefore,
 $$\begin{array}{ll}
      &\Pr_{M'}(\psi) \\
      &=  \sum\limits_{c \in \mathcal{T}_{\mathcal{Z}}} \Pr_{M'}(\phi_c)\\
      & = \sum\limits_{c\in \mathcal{T}_{\mathcal{Z}}} \Pr_{M'}\left(\bigwedge_{j\in\{1,\ldots,t\}} (Z_j=z_j)
 \land  \bigwedge_{Z \in \mathcal{Z}} (Z = c_Z )  \land
      \bigwedge_{i\in \{1,\ldots,k\}}  [X_i\leftarrow1, Pa^{X_i}(Y_i)\leftarrow c_{Pa^{X_i}(Y_i)}](Y_i=1 )\right)\\ 
& =\sum\limits_{\substack{c_{Pa^{X_i}(Y_i)}\in
    \mathcal{T}_{Pa^{X_i}(Y_i)}\\ \forall i\in \{1,\ldots,k\}}} \Pr_{M'} \Biggl(\bigwedge_{j\in\{1,\ldots,t\}}Z_j=z_j
 \land  \bigwedge_{i \in \{1,\ldots,k\}} (Pa^{X_i}(Y_i) = c_{Pa^{X_i}(Y_i)} ) \\
 & \hspace{1.55in} \land
      \bigwedge_{i\in \{1,\ldots,k\}}  [X_i\leftarrow1, Pa^{X_i}(Y_i)\leftarrow c_{Pa^{X_i}(Y_i)}](Y_i=1 )\Biggr).\\
  \end{array}$$
 
 If $M'$ is i-compatible with $M$, then we can further conclude that

$$\begin{array}{ll}
 &\Pr_{M'}(\psi) \\
   & =\sum\limits_{\substack{c_{Pa^{X_i}(Y_i)}\in
    \mathcal{T}_{Pa^{X_i}(Y_i)}\\ \forall i\in \{1,\ldots,k\}}} \Biggl( \Pr_{M'} \biggl(\bigwedge_{j\in\{1,\ldots,t\}}Z_j=z_j 
 \land  \bigwedge_{i \in \{1,\ldots,k\}} (Pa^{X_i}(Y_i) = c_{Pa^{X_i}(Y_i)} )\biggr) \\
 & \hspace{1.2in}  \prod\limits_{i\in \{1,\ldots,k\}} \Pr_{M'}(
       [X_i\leftarrow1, Pa^{X_i}(Y_i)\leftarrow c_{Pa^{X_i}(Y_i)}](Y_i=1 )) \biggr)\\
 \end{array}$$

Since $\Pr_{M'}(\psi) = \Pr_M(\psi)$,  we get the desired result.

\end{proof}
}

 To prove this result,
we first convert $\psi$ to an equivalent
formula in a canonical form.
Specifically, it has the form $\psi_1 \lor \cdots \lor
\psi_m$, where the $\psi_i$s are mutually exclusive and 
each $\psi_i$ is a conjunction of the form $\psi_{i0}
\land \cdots \land \psi_{ir_i}$, where $\psi_{i0}$ is a simple formula
  and for $1 \le j \le r_i$, $\psi_{ij}$ is a formula of the
    form $[\vec{X}_{j} \gets \vec{x}_{j}] (\bigwedge_{k\in \{1,\ldots,t_{ij}\}}
  Y_{ijk}=y_{ijk})$, and the interventions are all 
distinct.  This conversion just involves standard propositional
reasoning and two properties which hold under the semantics
described in Section~\ref{sec:semantics}. The
first is that $[{Y} \gets {y}]\phi
\land [{Y} \gets {y}]\phi'$ is equivalent to 
$[{Y} \gets {y}](\phi \land \phi')$.  The second is that
$\neg [{Y} \gets {y}]\phi$ is equivalent to
$[{Y} \gets {y}]\neg \phi$. 

Ignore for now the requirements that the disjuncts be mutually exclusive, 
that all interventions be distinct, and that there be no leading
formulas involving interventions.  Using standard propositional
reasoning, we can transform a formula $\phi$ to an equivalent formula
in DNF, where the literals are either simple formulas or
intervention formulas (i.e., formulas of the form $[X \gets x]\phi$).
Of course, the disjuncts may not be mutually 
exclusive.  Again, using straightforward propositional reasoning, we
can convert the formula to a DNF where the disjuncts are mutually
exclusive.  Rather than writing out the tedious details, we give an
example.  Consider a formula of the form $(\phi_1 \land \phi_2) \lor
(\phi_3 \land \phi_4)$.  This is propositionally equivalent to
$$\begin{array}{l}
(\phi_1 \land \phi_2 \land \phi_3 \land \phi_4) \lor
(\phi_1 \land \phi_2 \land \neg \phi_3 \land \phi_4) \lor
(\phi_1 \land \phi_2 \land \phi_3 \land \neg \phi_4) \lor
(\phi_1 \land \phi_2 \land \neg \phi_3 \land \neg \phi_4)\\ \lor
(\neg \phi_1 \land \phi_2 \land \phi_3 \land \phi_4) \lor
(\phi_1 \land \neg \phi_2 \land \phi_3 \land \phi_4) \lor
     (\neg \phi_1 \land \neg \phi_2 \land \phi_3 \land \phi_4).
  \end{array}$$
We can now  apply the two equivalences
mentioned above to remove leading negations from intervention
formulas and to ensure that, in each disjunct, all interventions are
distinct.  These transformations maintain the fact that the disjuncts
are mutually exclusive.

Since the disjuncts in
$\psi$ are mutually exclusive,
the probability of $\psi$ is the  sum of the probabilities of the
disjuncts; that is, $\Pr(\psi) = \sum_{i\in \{1,\ldots,m\}} \Pr(\psi_i).$
To compute the probability of a disjunct  $\psi_i$,
we first simplify it using the following two observations.
First,  if a    formula involves an intervention
$[X\leftarrow x]$ on some variable $X$  such that $X$ is also set to
  $x$ in the formula, 
such as $(X=x \land [X\leftarrow x, Z \leftarrow z](Y=1))$,
the intervention $X\leftarrow x$
is redundant and can be dropped; for example,
$X=0 \land [X\leftarrow 0, Z \leftarrow 1](Y=1)$ is equivalent to $X=0
\land [Z \leftarrow 1](Y=1)$. 
Second, if an intervention formula does not contain a descendant of the
intervened variables, such as $\psi = [\vec{X}\gets \vec{x}]
(\psi_1\land \psi_2)$, where  
all variables in  $\psi_1$ are non-descendants of the variables in
$\vec{X}$, then 
the variables in $\psi_1$
are not affected by the intervention, so $\psi_1$ can be pulled out of
the scope of the intervention; that is,
$\psi$ is equivalent to $\psi_1 \land [X\gets x] (\psi_2)$.
Using these observations, we remove all interventions that are redundant
and pull formulas involving only non-descendants 
of the intervened variables out of the intervention formula.

  After this simplification,  without loss of generality,
  the disjunct $\psi_i$ is a conjunction of formulas 
  $\psi_{i0} \land \left(\bigwedge_{j\in \{1,\ldots,r_i\}} \psi_{ij}\right)$,
  where $\psi_{i0} = \left(\bigwedge_{j\in\{1,\ldots,s_i\}} (Z_{ij}=
  z_{ij})\right)$ is a simple 
 formula (with no intervention), and
$\psi_{ij}$ for $j>0$ has the form
  $[\vec{X}_j\leftarrow \vec{x}_j] (\bigwedge_{k\in \{1,\ldots,t_{ij}\}}
 Y_{ijk}=y_{ijk})$, where $Y_{ijk}$ is a descendant of 
some variable in  $\vec{X}_j$ in $M$. The following theorem
   proves the result for $\psi_i$, which completes the proof.

    \begin{theorem}
      If $M$ is a CBN and
 $\psi_i = \psi_{i0} \land \left(\bigwedge_{j\in \{1,\ldots,r_i\}}
      \psi_{ij}\right)$, 
   where $\psi_i$ contains no redundant interventions,
      $\psi_{i0} = \left(\bigwedge_{j\in\{1,\ldots,s_i\}} (Z_{ij} =
   z_{ij})\right)$ is a simple 
 formula (with no interventions), and
$\psi_{ij}$ for $j>0$ has the form
 $[\vec{X}_j\leftarrow \vec{x}
_j] (\bigwedge_{k\in\{1,\ldots,
   t_{ij}\}}Y_{ijk}=y_{ijk})\biggr))$, where $Y_{ijk}$ is a descendant
 of 
some variable in $\vec{X}_j$ in $M$, 
  then $\Pr(\psi_i)$ can be computed by determining the probability
   of formulas that do not involve an intervention.
\label{thm:arbitrarydesc}
  \end{theorem}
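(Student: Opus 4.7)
The plan is to adapt the proof of Theorem~\ref{thm:pns}(b), which already handles the case of a single intervention conjunct, to the general disjunct $\psi_i$. Let $\mathcal{Z} = (\mathcal{U} \cup \mathcal{V}) \setminus \{Z_{i1},\ldots,Z_{is_i}\}$, the variables not explicitly asserted in the simple component $\psi_{i0}$. First I would decompose
$$\Pr_M(\psi_i) = \sum_{c \in \mathcal{T}_{\mathcal{Z}}} \Pr_M\bigl(\psi_i \land \bigwedge_{Z \in \mathcal{Z}}(Z = c_Z)\bigr),$$
by total probability, since the events on the right are mutually exclusive and their union is $\psi_i$.

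Next, for each $c$, I would identify the set $S_c$ of ccces $\alpha$ for which $\phi_\alpha \Rightarrow \psi_i \land \bigwedge_{Z \in \mathcal{Z}}(Z = c_Z)$, proceeding topologically through the dag as in the proof of Theorem~\ref{ccce}. Such an $\alpha$ must contain: (a) $(W = c_W \mid Pa(W) = c_{Pa(W)})$ for every $W \in \mathcal{Z}$; (b) $(Z_{ij} = z_{ij} \mid Pa(Z_{ij}) = c_{Pa(Z_{ij})})$ for every conjunct of $\psi_{i0}$; and (c) for every intervention conjunct $\psi_{ij} = [\vec{X}_j \gets \vec{x}_j](\bigwedge_k Y_{ijk} = y_{ijk})$, a chain of conditional events that propagates the intervention down the dag to each $Y_{ijk}$, using the values $\vec{x}_j$ on the intervened ancestors and the values from $c$ on all other ancestors. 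Since the interventions in distinct $\psi_{ij}$s are distinct (by canonical form) and $\psi_i$ contains no redundant interventions, the conditional events produced in (c) for different $j$ can be assembled consistently. By Theorem~\ref{ccce}, distinct ccces have mutually exclusive corresponding formulas, so $\Pr_M(\psi_i \land \bigwedge_{Z \in \mathcal{Z}}(Z = c_Z)) = \Pr_{M'}(\phi_{S_c})$ for any causal model $M'$ i-compatible with $M$.

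Finally, i-compatibility makes all the conditional events in $S_c$ mutually independent, so $\Pr_{M'}(\phi_{S_c})$ factors as a product of cpt entries. Each cpt entry is a conditional probability of the form $P(W=w \mid Pa(W) = \vec{v})$, which is the probability of a simple conditional event with no intervention and can therefore be estimated from observational data. Summing over $c \in \mathcal{T}_{\mathcal{Z}}$ expresses $\Pr_M(\psi_i)$ as a sum of products of intervention-free conditional probabilities, as desired.

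The main obstacle is step (c): when some $Y_{ijk}$ is a deep descendant of $\vec{X}_j$ rather than a direct child, one must enumerate all intermediate variables on dag-paths from $\vec{X}_j$ to $Y_{ijk}$ and verify that the resulting chain of conditional events correctly encodes the intervention semantics given by $\sat$. The descendant hypothesis ensures that every such intermediate variable lies in $\mathcal{Z}$ and is therefore marginalized, but one must check carefully that when two distinct interventions $\vec{X}_j$ and $\vec{X}_{j'}$ share common descendants, the conditional events assigned from each chain agree or can be consolidated; this is where the distinctness of interventions and the no-redundancy hypothesis on $\psi_i$ play their essential role.
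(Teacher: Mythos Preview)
Your overall plan mirrors the paper's proof, and steps (a) and (b) are correct. The gap is in step (c), and your ``obstacle'' paragraph misdiagnoses it. When $Y_{ijk}$ is not a direct child of $\vec{X}_j$, the intermediate descendants of $\vec{X}_j$ on the way to $Y_{ijk}$ do \emph{not} take their observational values $c_Z$ in the post-intervention world; they may take any value, and that value is recorded in $\alpha$ by a conditional event whose parent setting differs from the observational one. Consider the dag $X \to W \to Y$ with the conjunct $[X \gets 1](Y=1)$, and suppose in the observational setting $c$ we have $X=0$ and $W = c_W = 0$. A ccce $\alpha$ satisfies $[X \gets 1](Y=1)$ iff it contains \emph{either} $(W=0 \mid X=1)\land(Y=1 \mid W=0)$ \emph{or} $(W=1 \mid X=1)\land(Y=1 \mid W=1)$. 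Your (c) pins the intermediate $W$ to $c_W=0$, so it captures only the first branch; your $S_c$ is strictly too small and the resulting probability is wrong. The outer sum over $c$ does not repair this: the event $(W = c_W \mid X=0)$ you add in (a) and the event $(W = ?\mid X=1)$ needed for the intervention have different parent settings and coexist independently in $\alpha$, so the post-intervention value of $W$ is not tied to $c_W$ at all.

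The paper handles this by introducing, for each intervention conjunct $\psi_{ij}$, a \emph{separate} family of post-intervention settings $c^j \in \mathcal{T}_j'$ that fix $\vec{X}_j = \vec{x}_j$ and $Y_{ijk} = y_{ijk}$ but let all other descendants of $\vec{X}_j$ range freely. The ccces implying $\psi_{ic}$ are then those that contain, for \emph{some} choice of $c^j$ for each $j$, the events $(X = c^j_X \mid Pa(X) = c^j_{Pa(X)})$ for every such descendant $X$. This makes $\phi_{S_c}$ a \emph{disjunction} over tuples $(c^1,\ldots,c^{r_i})$, and after applying i-compatibility the final expression carries a double sum, over $c$ and over the $c^j$'s. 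One must also discard infeasible tuples (two interventions forcing contradictory conditional events on a shared descendant) and drop duplicate or redundant conditional events before factoring; you flagged the overlap issue but not the infeasibility one.
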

    \begin{proof}
    The proof proceeds along lines very similar to the proof of Theorem~\ref{thm:pns}.
Let $\mathcal{Z} = \mathcal{U}\cup
  \mathcal{V} \setminus \{Z_{ij} : j\in \{1,\ldots, s\}\}$. 
$\mathcal{T}_{\mathcal Z}$ has $2^{|\mathcal{Z}|}$ settings.
  For a setting $c \in \mathcal{T}_{\mathcal Z}$, let $c_Z$ be the
  setting of the variable $Z$ in $c$.
  %
  Then
$$\begin{array}{ll}
  \Pr_M(\psi_i) = \\ \sum\limits_{c \in \mathcal{T}_{\mathcal{Z}}}
    \Pr_M\biggl(\bigwedge\limits_{j\in \{1,\ldots,s_i\}} (Z_{ij} = z_{ij})
        \land \left(\bigwedge\limits_{j\in \{1,\ldots,r_i\}}
   [\vec{X}_{j}\leftarrow \vec{x}_{j}] (\bigwedge\limits_{k\in\{1,\ldots,
    t_{ij}\}}Y_{ijk}=y_{ijk})\right)   \land  \bigwedge\limits_{Z \in
    \mathcal{Z}}(Z = c_Z)\biggr).
   \end{array}$$ 
   
   We next compute the probability of 
      $$\psi_{ic} = {\Pr}_M\biggl(\bigwedge_{j\in \{1,\ldots,s_i\}} (Z_{ij}
   = z_{ij}) \land \biggl(\bigwedge_{j\in \{1,\ldots,r_i\}}
      [\vec{X}_{j}\leftarrow \vec{x}_{j}] (\bigwedge_{k\in\{1,\ldots,
          t_{ij}\}}Y_{ijk}=y_{ijk})\biggr)   \land  \bigwedge_{Z \in
     \mathcal{Z}}(Z = c_Z )\biggr).$$

     From Theorem~\ref{ccce}, it follows that 
in all causal models $M'$ compatible with $M$,
    $${\Pr}_{M'}(\psi_{ic}) =
 \sum_{\phi_\alpha \implies \psi_{ic} } {\Pr}_{M'}(\phi_\alpha) .$$
 \commentout{
 For now, for ease of exposition, we assume
 that all interventions $[\vec{X}_j \leftarrow \vec{x}_j]$ in $\psi_{ic}$
 involve only a single variable; that is, $\vec{X}_j$ is a singleton.
After we give the proof in this case, we explain how to deal with the
general case.}
 Now
$\phi_\alpha \implies \psi_{ic}$ holds whenever $\alpha$ contains the
following events:
\begin{enumerate}
\item $(Z_{ij}=z_{ij}  \mid Pa(Z_{ij}) = c_{Pa(Z_{ij})})  $, for all
    $j\in \{1,\ldots,s_i\}$; 
\item $(Z = c_Z \mid Pa(Z) = c_{Pa(Z)})$, for all $Z \in \mathcal{Z};$

    \item $(X = c^j_{X}  \mid Pa(X) = c^j_{Pa(X) })$,
  for all $X\in\mathcal{X}_j'$, where 
  $\mathcal{X}_j'$ consists of all descendants of  
  the intervened variables in $\vec{X}_j$ other than the variables
  in $\vec{X}_j$
  and $c^j\in\mathcal{T}_j'$,
   the set of settings of the
variables in $\mathcal{U}\cup \mathcal{V}$, where the following
variables  are
fixed as follows: 
   \begin{enumerate}
        \item $\vec{X}_j = \vec{x}_j$,
          \item $Y_{ijk} = y_{ijk}$ for all $k\in \{1,\ldots, t_{ij}\}$,
          \item $Z_{ik}=z_{ik}$ for $Z_{ik}\notin(\vec{X}_j \cup 
\mathcal{X}_j')$, $k\in \{1,\ldots,s_i\}$, 
             \item   $Z = c_Z$ for all $Z\in \mathcal{Z}$ and $Z\notin (\vec{X}_j \cup \mathcal{X}_j')$.
   \end{enumerate}

\commentout{   
   
    \item $(X = c^j_{X}  \mid Pa(X) = c^j_{Pa(X) })$,
  for all $X\in\mathcal{X}_j'$, where 
  $\mathcal{X}_j'$ consists of descendants of $X_j$,
  and $c^j\in\mathcal{T}_j'$,
%
    the set of settings of the
variables in $\mathcal{U}\cup \mathcal{V}$, where the following
variables  are
fixed as follows: 
   \begin{enumerate}
   \item $X_j = x_j$
        \item $Y_{ijk} = y_{ijk}$ for all $k\in \{1,\ldots, t_{ij}\}$
   \item $Z_{ik}=z_{ik}$ for all  $k\in \{1,\ldots,s_i\}$, which are neither
    the same as $X_j$
   nor the descendants of 
   $X_j$, i.e., $Z_{ik}\notin(\{X_j\} \cup
          \mathcal{X}_j')$
   \item   $Z = c_Z$ for all $Z\in \mathcal{Z}$ and $Z\notin (\mathcal{X}_j'\cup\{X_j\})$.
   \end{enumerate}
   These events hold for every value of $j\in \{1,\ldots, r_i\}$.
   }
   \end{enumerate}
    Intuitively,
   $\mathcal{T}_j'$  captures all possible post-intervention
   settings of all variables that are descendants of $\vec{X}_j$,
   while fixing $Y_{ijk}$s as $y_{ijk}$. 
   By fixing the third set of events, $(X = c^j_{X}  | Pa(X) = c^j_{Pa(X) })$
   for all $X\in\mathcal{X}_j'$, we 
   ensure that all events involving descendants of $\vec{X}_j$
   are consistent with respect to one of the post-intervention
   settings $c^j \in \mathcal{T}_j'$. These events represent
   the effects of interventions in
      $\vec{X}_j\leftarrow \vec{x}_j$ on its descendants.
      For example, consider a causal graph as shown below
       and
   $\psi_i = [X_1\gets1, X_3\gets 1](Y=1)$. 
            \begin{center}
  \vspace{-4mm}
\begin{tikzpicture}
  \tikzstyle{var} = [circle, draw, thick]
  \node[var] (X1) at (0,0) {$X_1$};
  \node[var] (X2) at (0,-1.5) {$X_2$};
    \node[var] (X3) at (1,-1.5) {$X_3$};
    \node[var] (Y) at (0,-3) {Y};

  \draw[->, thick] (X1) -- (X2);
    \draw[->, thick] (X2) -- (Y);
      \draw[->, thick] (X3) -- (Y);
          \draw[->, thick] (X1) -- (X3);
\end{tikzpicture}
\vspace{-5mm}
\end{center}
 In this case, 
   $\vec{X}_j = \{X_1\gets1, X_3\gets 1\} $.
By the conditions mentioned above, $\alpha$ must contain
 one of the two
   events $((Y=1 \mid X_2=0, X_3=1) \land (X_2=0 \mid X_1=1))$ 
      or $((Y=1 \mid X_2=1, X_3=1) \land (X_2=1 \mid X_1=1))$,
   because
   $\mathcal{T}_1' =\{\{X_1=1, X_2=0, X_3=1, Y = 1\}, \{X_1=1, X_2=1, X_3=1, Y=1\}\}$.
  This condition ensures that if $X_1 = 1$ and $X_3=1$, then
   $\phi_\alpha$ implies
  $Y=1$. It is easy to see that if $\alpha$ does not contain either
  of these 
  two events,  then it must contain $((Y=0 \mid X_2=0, X_3=1) \land (X_2=0 \mid X_1=1))$ 
   or $((Y=0 \mid X_2=1, X_3=1) \land (X_2=1 \mid X_1=1))$, in which
      case  $\phi_\alpha$ does not imply  $\psi_i$.

      \commentout{
    For example, consider a causal graph $X_1\rightarrow X_2\rightarrow Y$ and
   $\psi_i = [X_1\gets1](Y=1)$.  In this case, $j=1$, $\mathcal{X}_1'
        = \{X_2,Y\}$ and,  
   following the conditions mentioned above, $\alpha$ must contain
   one of the two
   events $(Y=1 \mid X_2=0 \land X_2=0 \mid X_1=1)$ 
   or $(Y=1 \mid X_2=1 \land X_2=1 \mid X_1=1)$
   because
   $\mathcal{T}_1' =\{\{X_1=1, X_2=0, Y = 1\}, \{X_1=1, X_2=1, Y=1\}\}$.
  This condition ensures that whenever $X_1 = 1$, $\phi_\alpha$ ensures that
  $Y=1$. It is easy to see that whenever $\alpha$ does not contain any of these
  two events,  then it must contain $(Y=0 \mid X_2=0 \land X_2=0 \mid X_1=1)$ 
   or $(Y=0 \mid X_2=1 \land X_2=1 \mid X_1=1)$, in which case 
   $\phi_\alpha \centernot\implies \psi_i$.
   }
   %

 Let $S_c$ consist of all ccces that contain these events, and
let $\phi_{S_c}$ be the conjunction of the formulas
corresponding to the events in $S$. 
Thus,
$$ \begin{array}{ll}
\phi_{S_c} = & \biggl(\bigwedge\limits_{j'\in \{1,\ldots,s_i\}}
[  Pa(Z_{ij'}) \gets c_{Pa(Z_{ij'})}](Z_{ij'}=z_{ij'})\biggr)  
\land 
\biggl(\bigwedge\limits_{Z\in \mathcal{Z}} [  Pa(Z) \gets c_{Pa(Z)}](Z=c_Z)  \biggr)  \\
& \land  \bigwedge\limits_{j\in\{1,\ldots,r_i\}}\Biggl(\bigvee\limits_{c^j\in \mathcal{T}_j'}
 \biggl( \bigwedge\limits_{X\in \mathcal{X}_j'} 
 [  Pa(X) \gets c^j_{Pa(X)}](X=c^j_X)  \biggr)\Biggr)\\
\hspace{0.24in}=& \biggl(\bigwedge\limits_{j'\in \{1,\ldots,s_i\}} 
[  Pa(Z_{ij'}) \gets c_{Pa(Z_{ij'})}](Z_{ij'}=z_{ij'})\biggr) 
 \land 
\biggl(\bigwedge\limits_{Z\in \mathcal{Z}}
 [  Pa(Z) \gets c_{Pa(Z)}](Z=c_Z)  \biggr)  \\
& \land  \Biggl(\bigvee\limits_{\{c^j\in \mathcal{T}_j'~: ~j\in \{1,\ldots, r_i\}\}} 
\biggl( \bigwedge\limits_{X\in \mathcal{X}_l',~
  l\in\{1,\ldots,r_i\}} [  Pa(X) \gets c^l_{Pa(X)}](X=c^l_X)  \biggr)\Biggr).
\end{array}
 $$
Then by Theorem~\ref{ccce},
$$\phi_{S_c} \dimp \bigvee_{\alpha \in S_c} \phi_\alpha.$$
Since the formulas $\phi_\alpha$ for distinct ccces in $S_c$ are
mutually exclusive, we have that 
$\Pr_{M'}(\psi_{ic}) = \Pr_{M'}(\phi_S)$.
 Therefore,
 \setlength\parindent{0pt}   
\setlength\arraycolsep{0pt}

 $$\begin{array}{ll}
      &\Pr_{M'}(\psi_i) \\
      &=  \sum\limits_{c \in \mathcal{T}_{\mathcal{Z}}} \Pr_{M'}(\psi_{ic})\\
      & = \sum\limits_{\substack{c\in
       \mathcal{T}_{\mathcal{Z}},\\ c^j\in \mathcal{T}_j'~:~ j\in \{1,\ldots,r_i\}}}\\     
&  \Pr_{M'}\left(
      \bigwedge\limits_{j'\in  \{1,\ldots,s_i\}} (Z_{ij'} = z_{ij'})
 \land  \bigwedge\limits_{Z \in \mathcal{Z}} (Z = c_Z )  \land
 \bigwedge\limits_{ X\in \mathcal{X}_l', ~l\in \{1,\ldots,r_i\}} [Pa(X)
                \leftarrow c^l_{Pa(X)}] (X= c^l_{X})\right).
  \end{array}$$
 
 We can further simplify this expression.
Specifically, we can get rid of $[Pa(X) \leftarrow c^j_{Pa(X)}] (X= c^j_{X})$
for all descendants $X$ of some $Y_{ijk}$ with  $k\in \{1,\ldots,t_{ij}\}$
and $j\in \{1,\ldots,s_i\}$. 
 We leave the 
details to the reader.
 The expression above
   may be infeasible for some combinations of settings $c\in \mathcal{T}_\mathcal{Z}$
 and $c^l$ for all $l\in \{1,\ldots,r_i\}$.
For example $[X\gets 1](Y=0) \land [X\gets 1](Y=1)$ has zero probability.
Furthermore, certain formulas in $\bigwedge_{X\in
   \mathcal{X}_l', l\in \{1,\ldots,r_i\}} [Pa(X) \leftarrow c^l_{Pa(X)}] (X= c^l_{X})$
 may be duplicates,
 and some interventions may be redundant.
We need to drop the duplicates and redundant interventions
before further simplifying the expression. 
For ease of exposition, we assume that the expression is feasible,
all conjuncts in $\bigwedge_{X\in
   \mathcal{X}_l', l\in \{1,\ldots,r_i\}} [Pa(X) \leftarrow c^l_{Pa(X)}] (X= c^l_{X})$
are distinct,
and all interventions are non-redundant.
 
 If $M'$ is i-compatible with $M$, then we can further conclude that
 %
 
\setlength\parindent{0pt}   
\setlength\arraycolsep{0pt}

$$\begin{array}{ll}
 &\Pr_{M'}(\psi_i) \\
      &= \sum\limits_{\substack{c\in  \mathcal{T}_{\mathcal{Z}},\\ 
      c^j\in \mathcal{T}_j': j\in \{1,\ldots,r_i\}}}\\
 &     \Pr_{M'}\left(\bigwedge\limits_{j'\in \{1,\ldots,s_i\}} (Z_{ij'} = z_{ij'}) 
 \land  \bigwedge\limits_{Z \in \mathcal{Z}} (Z = c_Z ) \right)\Pr_{M'}\biggl(
 \bigwedge\limits_{\substack{X\in \mathcal{X}_l', \\ l\in \{1,\ldots,r_i\}}} [Pa(X)
 \leftarrow c^l_{Pa(X)}] (X =  c^l_{X})\biggr)\\  
 &= \sum\limits_{\substack{c\in \mathcal{T}_{\mathcal{Z}}, \\
       c^j\in \mathcal{T}_j': j\in \{1,\ldots,r_i\}}}\\
       &    \Pr_{M'}\left(\bigwedge\limits_{j'\in \{1,\ldots,s_i\}} (Z_{ij'} = z_{ij'}) 
  \land  \bigwedge\limits_{Z \in \mathcal{Z}} (Z = c_Z ) \right) \prod\limits_{\substack{ X\in \mathcal{X}_l',\\
  l\in
        \{1,\ldots,r_i\}}} \Pr_{M'}  
    (X= c^l_{X} \mid Pa(X) \leftarrow c^l_{Pa(X)} ).
 \end{array}$$

 Since $\Pr_{M'}(\psi_i) = \Pr_M(\psi_i)$,  we get the desired result.
\commentout{
 We now briefly comment on what happens if the interventions are not
 on single variables. If $\psi_i$ contains multiple
 interventions, $[\vec{X}_j \leftarrow \vec{x}_j]$,
the same proof extends with a modification in the events that capture post-intervention
settings of the descendants of $X_j$. These events need to capture post-intervention
setting of any of the descendants of $\vec{X}_j$.
Specifically, $\phi_\alpha \implies \psi_{ic}$ whenever $\alpha$ contains the following
 sets of events (the first two sets are the same as above):
\begin{enumerate}
\item $(Z_{ij}=z_{ij}  \mid Pa(Z_{ij}) = c_{Pa(Z_{ij})})  $, for all $j\in \{1,\ldots,s_i\}$
 \item $(Z = c_Z \mid Pa(Z) = c_{Pa(Z)})$, for all $Z \in \mathcal{Z}$
 \item $(X = c^j_{X}  \mid Pa(X) = c^j_{Pa(X) })$,
  for all $X\in\mathcal{X}_j'$, where 
  $\mathcal{X}_j'$ consists of all descendants of  
 the intervened variables in $\vec{X}_j$ while excluding the variables in $\vec{X}_j$
  and $c^j\in\mathcal{T}_j'$,
   the set of settings of the
variables in $\mathcal{U}\cup \mathcal{V}$, where the following
variables  are
fixed as follows: 
   \begin{enumerate}
   \item $\vec{X}_j = \vec{x}_j$
        \item $Y_{ijk} = y_{ijk}$ for all $k\in \{1,\ldots, t_{ij}\}$
   \item $Z_{ik}=z_{ik}$ for all  $k\in \{1,\ldots,s_i\}$, which are neither
    in $\vec{X}_j$
   nor their descendants
   $\mathcal{X}_j'$, i.e., $Z_{ik}\notin(\vec{X}_j \cup
          \mathcal{X}_j')$
   \item   $Z = c_Z$ for all $Z\in \mathcal{Z}$ and $Z\notin (\vec{X}_j \cup \mathcal{X}_j')$.
   \end{enumerate}
   These events hold for every value of $j\in \{1,\ldots, r_i\}$.
   \end{enumerate}
      \begin{center}
  \vspace{-4mm}
\begin{tikzpicture}
  \tikzstyle{var} = [circle, draw, thick]
  \node[var] (X1) at (0,0) {$X_1$};
  \node[var] (X2) at (0,-1.5) {$X_2$};
    \node[var] (X3) at (1,-1.5) {$X_3$};
    \node[var] (Y) at (0,-3) {Y};

  \draw[->, thick] (X1) -- (X2);
    \draw[->, thick] (X2) -- (Y);
      \draw[->, thick] (X3) -- (Y);
          \draw[->, thick] (X1) -- (X3);
\end{tikzpicture}
\vspace{-5mm}
\end{center}
     For example, consider a causal graph as shown above and
   $\psi_i = [X_1\gets1, X_3\gets 1](Y=1)$.  In this case, 
   $\vec{X}_j = \{X_1\gets1, X_3\gets 1\} $.
   Following the conditions mentioned above, $\alpha$ must contain
   one of the two
   events $((Y=1 \mid X_2=0, X_3=1) \land (X_2=0 \mid X_1=1))$ 
   or $((Y=1 \mid X_2=1, X_3=1) \land (X_2=1 \mid X_1=1))$
   because
   $\mathcal{T}_1' =\{\{X_1=1, X_2=0, X_3=1, Y = 1\}, \{X_1=1, X_2=1, X_3=1, Y=1\}\}$.
  This condition ensures that whenever $X_1 = 1$ and $X_3=1$,
  $\phi_\alpha$ ensures that
  $Y=1$. It is easy to see that whenever $\alpha$ does not contain any of these
  two events,  then it must contain $((Y=0 \mid X_2=0, X_3=1) \land (X_2=0 \mid X_1=1))$ 
   or $((Y=0 \mid X_2=1, X_3=1) \land (X_2=1 \mid X_1=1))$, in which case 
   $\phi_\alpha \centernot\implies \psi_i$.
   %
   }
 \end{proof}

In terms of complexity, each intervention $\vec{X}_j\leftarrow \vec{x}_j$ requires at most $2^n$ different
settings in the set $\mathcal{T}_j'$. Therefore, the expression above
for $\Pr_M(\psi_i)$ 
has  $O(2^{n(r_i+1) })$ 
setting combinations in the summation and $O(n r_i+1)$ conditional probability calculations for each
such setting. This shows that an arbitrary formula $\psi$ can be evaluated in terms of 
$O(m (nr^*+1) 2^{n(r^*+1)})$ conditional probability calculations, where $r^*$ is the maximum
number of conjuncts in a disjunction $\psi_i$
 that involve at least one intervention, and $m$ is the number of disjuncts in the DNF.


\paragraph{Acknowledgments:} Halpern's work was supported in part by
AFOSR grant FA23862114029, MURI grant W911NF-19-1-0217, ARO grant
W911NF-22-1-0061, and NSF grant FMitF-2319186.

\newpage
\bibliographystyle{chicago}
\bibliography{z,joe} 
\newpage

\appendix
\onecolumn

\end{document}